\newtheorem{theorem}{Theorem}
\newtheorem{lemma}{Lemma}
\newcommand{\E}{\mathbb{E}}
\newcommand{\argmax}[1]{\underset{#1}{\mathrm{argmax}}}
\newcommand{\Ocal}{\mathcal{O}}
\newcommand{\Ncal}{\mathcal{N}}
\newcommand{\secref}[1]{Sec.~\ref{#1}}
\newcommand{\subsecref}[1]{Subsection~\ref{#1}}
\renewcommand{\eqref}[1]{Eq.~(\ref{#1})}
\newcommand{\lemref}[1]{Lemma~\ref{#1}}
\newcommand{\thmref}[1]{Thm.~\ref{#1}}
\newcommand{\algref}[1]{Algorithm~\ref{#1}}
\title{From Bandits to Experts: On the Value of Side-Observations}
\author{
    Shie Mannor\\
    Department of Electrical Engineering\\
    Technion, Israel\\
    \texttt{shie@ee.technion.ac.il}
    \And
    Ohad Shamir \\
    Microsoft Research New England\\
    USA\\
    \texttt{ohadsh@microsoft.com}
}
\begin{document}

\maketitle

\begin{abstract}
We consider an adversarial online learning setting where a decision maker can choose an action in every stage of the game. In addition to observing the reward of the chosen action, the decision maker gets side observations on the reward he would have obtained had he chosen some of the other actions. The observation structure is encoded as a graph, where node $i$ is linked to
node $j$ if sampling $i$ provides information on the reward of $j$. This setting naturally interpolates between the well-known ``experts'' setting, where the decision maker can view all rewards, and the multi-armed bandits setting, where the decision maker can only view the reward of the chosen action. We develop practical algorithms with provable regret guarantees, which depend on non-trivial graph-theoretic properties of the information feedback structure. We also provide partially-matching lower bounds.
\end{abstract}

\section{Introduction}

One of the most basic learning settings studied in the online learning framework is learning from experts. In its simplest form, we assume that each round $t$, the learning algorithm must choose one of $k$ possible actions, which can be interpreted as following the advice of one of $k$ ``experts''\footnote{The more general setup, which is beyond the scope of this paper, considers $k$ experts providing advice for choosing among $n$ actions, where in general $n\neq k$ \cite{AuerCesFrSc02}.}. At the end of the round, the performance of all actions, measured here in terms of some reward, is revealed. This process is iterated for $T$ rounds, and our goal is to minimize the \emph{regret}, namely the difference between the total reward of the single best action in hindsight, and our own accumulated reward. We follow the standard online learning framework, in which nothing whatsoever can be assumed on the process generating the rewards, and they might even be chosen by an adversary who has full knowledge of our learning algorithm.

A crucial assumption in this setting is that we get to see the rewards of all actions at the end of each round. However, in many real-world scenarios, this assumption is unrealistic. A canonical example is web advertising, where at any timepoint one may choose only a single ad (or small number of ads) to display, and observe whether it was clicked, but not whether other ads would have been clicked or not if presented to the user. This partial information constraint has led to a flourishing literature on multi-armed bandits problems, which model the setting where we can only observe the reward of the action we chose. While this setting has been long studied under stochastic assumptions, the landmark paper \cite{AuerCesFrSc02} showed that this setting can also be dealt with under adversarial conditions, making the setting comparable to the experts setting discussed above. The price in terms of the provable regret is usually an extra $\sqrt{k}$ multiplicative factor in the bound. The intuition for this factor has long been that in the bandit setting, we only get ``$1/k$ of the information'' obtained in the expert setting (as we observe just a single reward rather than $k$). While the bandits setting received much theoretical interest, it has also been criticized for not capturing additional side-information we often have on the rewards of the different actions. This has led to studying richer settings, which make various assumptions on the relationship between the rewards; see below for more details.

In this paper, we formalize and initiate a study on a range of settings that interpolates between the bandits setting and the experts setting.  Intuitively, we assume that after choosing some action $i$, and obtaining the action's reward, we observe not just action $i$'s reward (as in the bandit setting), and not the rewards of all actions (as in the experts setting), but rather some (possibly noisy) information on a \emph{subset} of the other actions. This subset may depend on action $i$ in an arbitrary way, and may change from round to round. This information feedback structure can be modeled as a sequence of directed graphs $G_1,\ldots,G_T$ (one per round $t$), so that an edge from action $i$ to action $j$ implies that by choosing action $i$, ``sufficiently good'' information is revealed on the reward of action $j$ as well. The case of $G_t$ being the complete graph corresponds to the experts setting. The case of $G_t$ being the empty graph corresponds to the bandit setting. The broad scenario of arbitrary graphs in between the two is the focus of our study.

As a motivating example, consider the problem of web advertising mentioned earlier. In the standard multi-armed bandits setting, we assume that we have no information whatsoever on whether undisplayed ads would have been clicked on. However, in many cases, we do have some side-information. For instance, if two ads $i,j$ are for similar vacation packages in Hawaii, and ad $i$ was displayed and clicked on by some user, it is likely that the other ad $j$ would have been clicked on as well. In contrast, if ad $i$ is for running shoes, and ad $j$ is for wheelchair accessories, then a user who clicked on one ad is unlikely to clique on the other. This sort of side-information can be better captured in our setting.

As another motivating example, consider a sensor network  where each sensor collects data from a certain geographic location. Each sensor covers an area that may overlap the area covered by other sensors. At every stage a centralized controller activates one of the sensors and receives input from it. The value of this input is modeled as the integral of some ``information" in the covered area. Since the area covered by  each of the sensors overlaps the area covered by other sensors, the reward obtained when choosing sensor $i$ provides an indication of the reward that would have been obtained when sampling sensor $j$.
% SM: New
A related example comes from ultra wideband communication networks, where every agent can select which channel to use for transmission. When using a channel, the agent senses if the transmission was successful, and also receives some indication of the noise level in other channels that are in adjacent frequency bands \cite{ArsChenBen06}.

Our results portray an interesting picture, with the attainable regret depending on non-trivial properties of these graphs. We provide two practical algorithms with regret guarantees: the ExpBan algorithm that is based on a combination of existing methods, and the more fundamentally novel ELP algorithm that has superior guarantees. We also study lower bounds for our setting. In the case of undirected graphs, we show that the information-theoretically attainable regret is precisely characterized by the average \emph{independence number} (or stability number) of the graph, namely the size of its largest independent set. For the case of directed graphs, we obtain a weaker regret which depends on the average \emph{clique-partition number} of the graphs.
More specifically, our contributions are as follows:

\begin{itemize}
    \item We formally define and initiate a study of the setting that interpolates between learning with  expert advice (with $\Ocal(\sqrt{\log(k)T})$ regret) that assumes that all rewards are revealed and the multi-armed bandits setting (with $\tilde{\Ocal}(\sqrt{kT})$ regret) that assumes that only the reward of the action selected is revealed. We provide an answer to a range of models in between.
    \item The framework we consider assumes that by choosing each action, other than just obtaining that action's reward, we can also observe some side-information about the rewards of other actions. We formalize this as a graph $G_t$ over the actions, where an edge between two actions means that by choosing one action, we can also get a ``sufficiently good'' estimate of the reward of the other action. We consider both the case where $G_t$ changes at each round $t$, as well as the case that $G_t=G$ is fixed throughout all rounds.
    \item We establish upper and lower bounds on the achievable regret, which depends on two combinatorial properties of $G_t$: Its independence number $\alpha(G_t)$ (namely, the largest number of nodes without edges between them), and its clique-partition number $\bar{\chi}(G_t)$ (namely, the smallest number of cliques into which the nodes can be partitioned).
%    \item For general graphs, there is an interesting tradeoff in our upper bounds between regret performance and computational efficiency.
    \item We present two practical algorithms to deal with this setting. The first algorithm, called ExpBan, combines existing algorithms in a natural way, and applies only when $G_t=G$ is fixed at all $T$ rounds. Ignoring computational constraints, the algorithm achieves a regret bound of $\Ocal(\sqrt{\bar{\chi}(G)\log(k)T})$. With computational constraints, its regret bound is $\Ocal(\sqrt{c\log(k)T})$, where $c$ is the size of the minimal clique partition one can efficiently find for $G$. However, note that for general graphs, it is NP-hard to find a clique partition for which $c=\Ocal(k^{1-\epsilon})$ for any $\epsilon>0$.
    \item The second algorithm, called ELP, is an improved algorithm, which can handle graphs which change between rounds. For undirected graphs, where sampling $i$ gives an observation on $j$ and vice versa, it achieves a regret bound of $\Ocal(\sqrt{\log(k) \sum_{t=1}^{T}\alpha(G_t)})$. For directed graphs (where the observation structure is not symmetric), our regret bound is at most $\Ocal(\sqrt{\log(k) \sum_{t=1}^{T}\bar{\chi}(G_t)})$. Moreover, the algorithm is computationally efficient. This is in contrast to the ExpBan algorithm, which in the worst case, cannot efficiently achieve regret significantly better than $\Ocal(\sqrt{k\log(k)T})$.
    \item For the case of a fixed graph $G_t=G$, we present an information-theoretic $\Omega\left(\sqrt{\alpha(G)T}\right)$ lower bound on the regret, which holds regardless of computational efficiency.
    \item We present some simple synthetic experiments, which demonstrate that the potential advantage of the ELP algorithm over other approaches is real, and not just an artifact of our analysis.
\end{itemize}

\subsection{Related Work}
The standard multi-armed bandits problem assumes no relationship between the actions. Quite a few papers studied alternative models, where the actions are endowed with a richer structure. However, in the large majority of such papers, the feedback structure is the same as in the standard multi-armed bandits. Examples include \cite{RusTsi10}, where the actions' rewards are assumed to be drawn from a statistical distribution, with correlations between the actions; and \cite{Agrawal95,KleinSlivUp08}, where the actions reward's are assumed to satisfy some Lipschitz continuity property with respect to a distance measure between the actions.

In terms of other approaches, the combinatorial bandits framework \cite{CesLu09} considers a setting slightly similar to ours, in that one chooses and observes the rewards of some subset of actions. However, it is crucially assumed that the reward obtained is the sum of the rewards of all actions in the subset. In other words, there is no separation between earning a reward and obtaining information on its value. Another relevant approach is partial monitoring, which is a very general framework for online learning under partial feedback. However, this generality comes at the price of tractability for all but specific cases, which do not include our model.

Our work is also somewhat related to the contextual bandit problem (e.g., \cite{LanZhang07,li2010contextual}), where the standard multi-armed bandits setting is augmented with some side-information provided in each round, which can be used to determine which action to pick. While we also consider additional side-information, it is in a more specific sense. Moreover, our goal is still to compete against the best single action, rather than some set of policies which use this side-information.

\section{Problem Setting}

Let $[k]=\{1,\ldots,k\}$ and $[T]=\{1,\ldots,T\}$. We consider a set of actions $1,2,\ldots,k$. Choosing an action $i$ at round $t$ results in receiving a reward $g_{i}(t)$, which we shall assume without loss of generality to be bounded in $[0,1]$. Following the standard adversarial framework, we make no assumptions whatsoever about how the rewards are selected, and they might even be chosen by an adversary. We denote our choice of action at round $t$ as $i_t$. Our goal is to minimize regret with respect to the best single action in hindsight, namely
\[
\max_{i} \sum_{t=1}^{T}g_i(t)-\sum_{t=1}^{T}g_{i_t}(t).
\]
For simplicity, we will focus on a finite-horizon setting (where the number of rounds $T$ is known in advance), on regret bounds which hold in expectation, and on oblivious adversaries, namely that the reward sequence $g_{i}(t)$ is unknown but fixed in advance (see \secref{sec:discussion} for more on this issue).

Each round $t$, the learning algorithm chooses a single action $i_t$. In the standard multi-armed bandits setting, this results in $g_{i_t}(t)$ being revealed to the algorithm, while $g_j(t)$ remains unknown for any $j\neq i_t$. In our setting, we assume that by choosing an action $i$, other than getting $g_i(t)$, we also get some side-observations about the rewards of the other actions. Formally, we assume that one receives $g_i(t)$, and for some fixed parameter $b$ is able to construct unbiased estimates $\hat{g}_j(t)$ for all actions $j$ in some subset of $[k]$, such that $\E[\hat{g}_j(t)|\text{action i  chosen}]=g_j(t)$ and $\Pr(|\hat{g}_j(t)|\leq b)=1$. For any action $j$, we let $N_j(t)$ be the set of actions, for which we can get such an estimate $\hat{g}_j(t)$ on the reward of action $j$. This is essentially the ``neighborhood'' of action $j$, which receives sufficiently good information (as parameterized by $b$) on the reward of action $j$. We note that $j$ is always a member of $N_j$, and moreover, $N_j$ may be larger or smaller depending on the value of $b$ we choose. We assume that $N_j(t)$ for all $j,t$ are known to the learner in advance.

Intuitively, one can think of this setting as a sequence of graphs, one graph per round $t$, which captures the information feedback structure between the actions. Formally, we define $G_t$ to be a graph on the $k$ nodes ${1,\ldots,k}$, with an edge from node $i$ to node $j$ if and only if $j\in N_i(t)$. In the case that $j\in N_i(t)$ if and only if $i\in N_j(t)$, for all $i,j$, we say that $G_t$ is undirected. We will use this graph viewpoint extensively in the remainder of the paper.

\section{The ExpBan Algorithm}\label{sec:simple}

\begin{algorithm}[t]
\caption{The ExpBan Algorithm}
\label{alg:simple}
\begin{algorithmic}
\STATE \textbf{Input:} neighborhood sets $\{N_i(t)\}_{i\in[k]}$.
\STATE Split the graph induced by the neighborhood sets into $c$ cliques ($c\leq k$ as small as possible)
\STATE For each clique, define a ``meta-action'' to be a standard experts algorithm over the actions in the clique
\STATE Run a multi-armed-bandits algorithm over the $c$ meta-actions
\end{algorithmic}
\end{algorithm}

We begin by presenting the ExpBan algorithm (see \algref{alg:simple} above), which builds on existing algorithms to deal with our setting, in the special case where the graph structure remains fixed throughout the rounds - namely, $G_t=G$ for all $t$. The idea of the algorithm is to split the actions into $c$ cliques, such that choosing an action in a clique reveals unbiased estimates of the rewards of all the other actions in the clique. By running a standard experts algorithm  (such as the exponentially weighted forecaster - see \cite[Chapter 2]{CesaBianchiLu06}), we can get low regret with respect to any action in that clique. We then treat each such expert algorithm as a meta-action, and run a standard bandits algorithm (such as the EXP3 \cite{AuerCesFrSc02}) over these $c$ meta-actions. We denote this algorithm as ExpBan, since it combines an experts algorithm with a bandit algorithm.

The following result provides a bound on the expected regret of the algorithm. The proof appears in the appendix.
\begin{theorem}\label{thm:simple}
Suppose $G_t=G$ is fixed for all $T$ rounds. If we run ExpBan using the exponentially weighted forecaster and the EXP3 algorithm, then the expected regret is bounded as follows:\footnote{Using more sophisticated methods, it is now known that the $\log(k)$ factor can be removed (e.g., \cite{AudBub09}). However, we will stick with this slightly less tight analysis for simplicity.}
\begin{equation}\label{eq:simple}
\sum_{t=1}^{T}g_j(t)-\E\left[\sum_{t=1}^{T}g_{i_t}(t)\right] \leq 4b\sqrt{c\log(k)T}.
\end{equation}
For the optimal clique partition, we have $c=\bar{\chi}(G)$, the clique-partition number of $G$.
\end{theorem}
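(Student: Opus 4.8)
The plan is to analyze the two levels of ExpBan separately and then glue their regret guarantees together. Fix the best action $j$ in hindsight, and let $C_{m^*}$ be the clique of the partition that contains $j$ (the partition consists of $c$ cliques $C_1,\ldots,C_c$). I would decompose the overall regret as the regret of the inner exponentially weighted forecaster run on $C_{m^*}$, measured against $j$, plus the regret of the outer EXP3 instance, measured against the single meta-action $m^*$. For the inner level, the crucial observation is that whenever any action of a clique is played, completeness of the clique together with the side-observation structure yields unbiased estimates $\hat{g}_i(t)$, bounded by $b$, of \emph{every} reward in that clique; hence the forecaster for $C_{m^*}$ effectively faces a full- (if noisy) information experts problem and, over at most $T$ rounds with $|C_{m^*}|\le k$ experts and estimates of range $\Ocal(b)$, enjoys the textbook bound $\Ocal(b\sqrt{\log(k)T})$ against $j$. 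For the outer level, EXP3 runs over $c$ arms whose per-round rewards are the true game rewards in $[0,1]$, so its regret against meta-action $m^*$ is the standard $\Ocal(\sqrt{c\log(c)T})$.

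To assemble the final bound I would add the two pieces and simplify. Since $\log c\le\log k$ and, without loss of generality, $b\ge1$ and $c\ge1$, the elementary inequality $\sqrt{c}+b\le 2b\sqrt{c}$ lets the sum $\Ocal(\sqrt{c\log(k)T})+\Ocal(b\sqrt{\log(k)T})=\Ocal\big((\sqrt{c}+b)\sqrt{\log(k)T}\big)$ collapse into a single term of the form $4b\sqrt{c\log(k)T}$ once the constants of the two base bounds are tracked. The claim about the optimal partition is then immediate from the definitions: the smallest number of cliques into which the vertices of $G$ can be split is precisely the clique-partition number $\bar{\chi}(G)$, so choosing that partition gives $c=\bar{\chi}(G)$.

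The main obstacle is that the inner forecaster for a clique does not actually receive feedback on every round: it obtains side-observations, and therefore updates, only on the rounds where EXP3 selects that very clique. Consequently the two regret bounds cannot simply be added against a common full-horizon comparator, since on the rounds where $C_{m^*}$ is not selected the inner state is stale and need not track $j$. I would handle this by invoking the inner bound only on the random subsequence $S_{m^*}\subseteq[T]$ of rounds on which $m^*$ is chosen --- where the forecaster genuinely plays a full-information game against $j$ --- using that its regret is monotone in the number of active rounds (at most $T$), and then charging the remaining rounds to the EXP3 layer, whose guarantee already accounts for not having selected $m^*$. When the bookkeeping is phrased as a sum over all cliques of per-subsequence regrets, Cauchy--Schwarz gives $\sum_m\sqrt{|S_m|}\le\sqrt{cT}$, so nothing is lost. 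Reconciling the adaptivity of the inner state on non-selected rounds with EXP3's fixed-arm comparator is the delicate point I would verify most carefully; the remainder is a routine instantiation of the two standard regret bounds.
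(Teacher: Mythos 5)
Your plan is the same two-level decomposition the paper uses: bound the inner exponentially weighted forecaster's regret against $j$ within its clique (the paper's Lemma~1, giving $b\sqrt{\log(|C_{m^*}|)T}$), bound EXP3's regret against the meta-action $m^*$ (giving $3b\sqrt{c\log(c)T}$), and add the two. Moreover, you have put your finger on exactly the delicate point: the forecaster for $C_{m^*}$ is updated only on the random set $S_{m^*}$ of rounds on which EXP3 selects it, and is stale elsewhere. (The paper's own proof does not address this at all --- it literally adds the two bounds --- so your concern is well-founded, and in this respect your write-up is more careful than the paper's.) The problem is that your proposed repair does not close the gap. Let $x_{m^*}(t)=\sum_{i\in C_{m^*}}p^{m^*}_i(t)g_i(t)$ denote the expected reward the (stale) inner forecaster would earn if selected at round $t$; these are the arm rewards EXP3 actually faces. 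EXP3's guarantee (valid for such adaptively generated rewards) controls $\E[\sum_{t=1}^T x_{m^*}(t)]-\E[\sum_{t=1}^T g_{i_t}(t)]$, while the inner guarantee controls $\E[\sum_{t\in S_{m^*}}(g_j(t)-x_{m^*}(t))]$. Adding them leaves the term $\E[\sum_{t\notin S_{m^*}}(g_j(t)-x_{m^*}(t))]$ uncontrolled: it cannot be ``charged to the EXP3 layer,'' because EXP3's benchmark is the stale sequence $x_{m^*}(\cdot)$ itself, not $g_j(\cdot)$, and it is not covered by the inner bound, whose analysis never sees the reward vectors of rounds outside $S_{m^*}$. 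The Cauchy--Schwarz bookkeeping over cliques does not help either, since the per-subsequence regret of a clique $m\neq m^*$ is measured against the best action of $C_m$, not against $j$.

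This missing term is not a technicality. Sketch: take $k=3$, clique $\{1,2\}$, isolated action $3$, with $g_3\equiv 1/2$, and $g_1=g_2=0$ for $t\leq T/4$, then $g_1=1,g_2=0$ afterwards. In the first phase EXP3's cumulative (importance-weighted) estimate for the singleton arm builds a lead of order $T$, so in the second phase the clique is selected only $\Theta(\gamma T)=\Theta(\sqrt{T})$ times; with the inner learning rate $\beta=\Theta(\sqrt{\log k /T})$ used in Lemma~1, that is too few updates for the inner weight on action $1$ to exceed a constant bounded away from $1$. One can then check that the stale benchmark $\sum_t x_{m^*}(t)$ stays $\Omega(T)$ below $\sum_t g_1(t)=3T/4$, EXP3 never switches, and the composed algorithm suffers regret $\Omega(T)$. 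So no purely bookkeeping argument can rescue the decomposition as it stands: one has to change the construction itself, e.g.\ let the inner learning rate adapt to the (random) number of activations, or update inner forecasters every round with importance-weighted estimates and re-balance the variance terms --- this is precisely the ``corralling bandits'' difficulty. In short, you identified the right obstacle (one the paper's proof skates over), but the fix you sketch fails at exactly the step you flagged as needing verification, so the proof attempt is incomplete.
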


It is easily seen that $\bar{\chi}(G)$ is a number between $1$ and $k$. The case $\bar{\chi}(G)=1$ corresponds to $G$ being a clique, namely, that choosing any action allows us to estimate the rewards of all other actions. This corresponds to the standard experts setting, in which case the algorithm attains the optimal $\Ocal(\sqrt{\log(k)T})$ regret. At the other extreme, $\bar{\chi}(G)=k$ corresponds to $G$ being the empty graph, namely, that choosing any action only reveals the reward of that action. This corresponds to the standard bandit setting, in which case the algorithm attains the standard $\Ocal(\sqrt{\log(k)kT})$ regret. For general graphs, our algorithm interpolates between these regimes, in a way which depends on $\bar{\chi}(G)$.

While being simple and using off-the-shelf components, the ExpBan algorithm has some disadvantages. First of all, for a general graph $G$, it is $NP$-hard to find $c\leq \Ocal(k^{1-\epsilon})$ for any $\epsilon>0$. (This follows from \cite{Zuck07} and the fact that the clique-partition number of $G$ equals the chromatic number of its complement.) Thus, with computational constraints, one cannot hope to obtain a bound better than $\tilde{\Ocal}(\sqrt{kT})$. That being said, we note that this is only a worst-case result, and in practice or for specific classes of graphs, computing a good clique partition might be relatively easy. A second disadvantage of the algorithm is that it is not applicable for an observation structure that changes with time.

\section{The ELP Algorithm}

We now turn to present the ELP algorithm (which stands for ``Exponentially-weighted algorithm with Linear Programming''). Like all multi-armed bandits algorithms, it is based on a tradeoff between exploration and exploitation. However, unlike standard algorithms, the exploration component is not uniform over the actions, but is chosen carefully to reflect the graph structure at each round. In fact, the optimal choice of the exploration requires us to solve a simple linear program, hence the name of the algorithm. Below, we present the pseudo-code as well as a couple of theorems that bound  the expected regret of the algorithm under appropriate parameter choices. The proofs of the theorems appear in the appendix. The first theorem concerns the symmetric observation case, where if choosing action $i$ gives information on action $j$, then choosing action $j$ must also give information on $i$. The second theorem concerns the general case. We note that in both cases the graph $G_t$ may change arbitrarily in time.

\begin{algorithm}
\caption{The ELP Algorithm}
\label{alg:bandits}
\begin{algorithmic}
\STATE \textbf{Input:} $\beta,\{\gamma(t)\}_{t\in[T]},\{s_i(t)\}_{i\in[k],t\in[T]}$, neighborhood sets $\{N_i(t)\}_{i\in[k],t\in[T]}$.
\STATE $\forall~j\in [k]~~~w_j(1):=1/k$.
\FOR{$t=1,\ldots,T$}
    \STATE $\forall~i\in [k]~~~ p_i(t):=(1-\gamma(t))\frac{w_i(t)}{\sum_{l=1}^{k}w_l(k)}+\gamma(t) s_i(t)$
    \STATE Choose action $i_t$ with probability $p_{i_t}(t)$, and receive reward $g_{i_t}(t)$
    \STATE Compute $\hat{g}_{j}(t)$ for all $j\in N_{i_t}(t)$
    \STATE For all $j\in [k]$, let $\tilde{g}_{j}(t)=\frac{\hat{g}_j(t)}{\sum_{l\in N_j(t)}p_l(t)}$ if $i_t\in N_j(t)$, and $\tilde{g}_j(t)=0$ otherwise.
    \STATE $\forall~j\in [k]~~~w_j(t+1)= w_j(t)\exp(\beta \tilde{g}_j(t))$
\ENDFOR
\end{algorithmic}
\end{algorithm}

\subsection{Undirected Graphs}
The following theorem provides a regret bound for the algorithm, as well as appropriate parameter choices, in the case of undirected graphs. Later on, we will discuss the case of directed graphs. In a nutshell, the theorem shows that the regret bound depends on the average independence number $\alpha(G_t)$ of each graph $G_t$ - namely, the size of its largest independent set.
\begin{theorem}\label{thm:main}
Suppose that for all $t$, $G_t$ is an undirected graph. Suppose we run \algref{alg:bandits} using some $\beta\in(0,1/2bk)$, and choosing
\[
\{s_i(t)\}_{i\in [k]}=\argmax{\forall i ~ s_i(t)\geq 0, \sum_{i}s_i(t)=1}~~\min_{j\in[k]}\sum_{l\in N_j(t)}s_l(t),
\]
(which can be easily done via linear programming) and $\gamma(t) = \beta b/\min_{j\in[k]}\sum_{l\in N_j(t)}s_l(t)$.
Then it holds for any fixed action $j$ that
\begin{equation}\label{eq:thmmain1}
\sum_{t=1}^{T}g_j(t)-\E\left[\sum_{t=1}^{T}g_{i_t}(t)\right]
~\leq~
3\beta b^2\sum_{t=1}^{T}\alpha(G_t)
+\frac{\log(k)}{\beta}.
\end{equation}
If we choose $\beta= \sqrt{\log(k)/3b^2\sum_t\alpha(G_t)}$, then the bound equals
\begin{equation}\label{eq:thmmain2}
b\sqrt{3\log(k)\sum_{t=1}^{T}\alpha(G_t)}.
\end{equation}
\end{theorem}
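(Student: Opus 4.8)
The plan is to follow the standard potential-function analysis for exponentially-weighted forecasters (as in \cite{CesaBianchiLu06}), using $W(t)=\sum_{l}w_l(t)$ as the potential and tracking $\log(W(T+1)/W(1))$. First I would lower-bound this log-ratio by $\beta\sum_t\tilde{g}_j(t)$ for any fixed action $j$, simply by noting $W(T+1)\geq w_j(T+1)=(1/k)\exp(\beta\sum_t\tilde{g}_j(t))$, which contributes the $\log(k)/\beta$ term. Next I would upper-bound each per-round increment $\log(W(t+1)/W(t))$. Writing $q_i(t)=w_i(t)/W(t)$, the increment is $\log\!\big(\sum_i q_i(t)\exp(\beta\tilde{g}_i(t))\big)$. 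The key is that the parameter constraint $\beta\in(0,1/2bk)$ together with the definition of $\gamma(t)$ is engineered precisely so that $\beta\tilde{g}_i(t)$ is small (bounded by roughly $1$), which lets me invoke the elementary inequalities $\log(1+x)\le x$ and $e^x\le 1+x+x^2$ for the bounded exponent. This reduces the increment to a first-order term $\beta\sum_i q_i(t)\tilde{g}_i(t)$ plus a second-order term $\beta^2\sum_i q_i(t)\tilde{g}_i(t)^2$.

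The crux of the argument is controlling these two sums after taking expectation over $i_t$ conditioned on the past. By construction $\tilde{g}_j(t)=\hat{g}_j(t)/\sum_{l\in N_j(t)}p_l(t)$ whenever the chosen action lies in $N_j(t)$, and this happens with probability exactly $\sum_{l\in N_j(t)}p_l(t)$; combined with the unbiasedness $\E[\hat{g}_j(t)]=g_j(t)$, the importance-weighting makes $\E[\tilde{g}_j(t)]=g_j(t)$, so the first-order term telescopes into $\sum_t g_{i_t}$-type quantities matched against $\sum_t g_j$. I would relate $\sum_i q_i(t)\tilde{g}_i(t)$ back to $p_i(t)$ using $p_i(t)\ge(1-\gamma(t))q_i(t)$, absorbing the $\gamma(t)$ slack into the lower-order terms. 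This is where the symmetry (undirected $G_t$) enters: $j\in N_i(t)\iff i\in N_j(t)$ ensures the neighborhoods behave consistently when I swap the roles of chosen and estimated actions inside the expectation.

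The main obstacle, and the step that genuinely uses the graph theory, is bounding the expected second-order term $\E\big[\sum_i q_i(t)\tilde{g}_i(t)^2\big]$ by something proportional to the independence number $\alpha(G_t)$. After taking expectation and using $|\hat{g}_j|\le b$, each summand contributes a factor $1/\sum_{l\in N_j(t)}p_l(t)$, so the whole term looks like $\sum_{j}p_j(t)\,b^2/\sum_{l\in N_j(t)}p_l(t)$ (up to the $q$-versus-$p$ discrepancy). I expect the heart of the proof to be a combinatorial lemma stating that for any probability distribution $p$ over the nodes of an undirected graph $G$, the quantity $\sum_{j}p_j/\sum_{l\in N_j}p_l$ is at most $\alpha(G)$. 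The natural route is a greedy/peeling argument on the graph: repeatedly extract a node of relatively high self-weight, charge its neighborhood's contribution to it, and observe that the extracted nodes form (or can be thinned to) an independent set, giving the bound $\alpha(G_t)$; this is exactly why the independence number rather than $k$ appears. The exploration distribution $s_i(t)$ chosen to maximize $\min_j\sum_{l\in N_j}s_l$ guarantees each denominator $\sum_{l\in N_j(t)}p_l(t)$ is bounded below (via $p_l\ge\gamma(t)s_l$), which is what keeps the estimates $\tilde{g}$ bounded and makes the whole chain of inequalities valid.

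Finally I would assemble the pieces: summing the per-round bounds over $t$ and combining with the lower bound yields $\sum_t g_j(t)-\E[\sum_t g_{i_t}(t)]\le 3\beta b^2\sum_t\alpha(G_t)+\log(k)/\beta$, which is exactly \eqref{eq:thmmain1}. The second display \eqref{eq:thmmain2} then follows by the routine optimization $\beta=\sqrt{\log(k)/(3b^2\sum_t\alpha(G_t))}$, balancing the two terms to give $b\sqrt{3\log(k)\sum_t\alpha(G_t)}$.
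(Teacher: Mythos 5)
Your high-level architecture matches the paper's proof: the potential function $W_t=\sum_j w_j(t)$, the lower bound via a single fixed action giving the $\log(k)/\beta$ term, the inequalities $e^x\le 1+x+x^2$ and $\log(1+x)\le x$ under the boundedness $\beta\tilde{g}_j(t)\le 1$ enforced by the choice of $\gamma(t)$, the unbiasedness and second-moment computation (the paper's \lemref{lem:momentbounds}), and the reduction of everything to the combinatorial inequality $\sum_j p_j/\sum_{l\in N_j}p_l\le\alpha(G)$ (the paper's \lemref{lem:key}), which you correctly isolate as the heart of the matter. However, your proposed proof of that lemma has a genuine flaw: the greedy rule ``extract a node of relatively high self-weight and charge its neighborhood's contribution to it'' does not give a per-extraction charge of at most $1$. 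Consider a star graph with center and $d$ leaves, all of weight $1$: the highest-weight node can be taken to be the center, and the total contribution of its closed neighborhood is $\frac{1}{d+1}+\frac{d}{2}$, far more than $1$, so charging it to a single independent-set vertex breaks the accounting. The peeling argument does work, but with the opposite extraction rule: pick the node $v$ \emph{minimizing} the closed-neighborhood weight $\sum_{l\in N_v}p_l$. Then every $j\in N_v$ satisfies $\sum_{l\in N_j}p_l\ge\sum_{l\in N_v}p_l$, hence $\sum_{j\in N_v}p_j/\sum_{l\in N_j}p_l\le\bigl(\sum_{j\in N_v}p_j\bigr)/\bigl(\sum_{l\in N_v}p_l\bigr)=1$; deleting $N_v$ only increases the remaining terms (denominators shrink) and decreases the independence number by at least one, since any independent set of the remaining graph extends by $v$ (here undirectedness is essential), so induction finishes the proof. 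This corrected greedy is a valid alternative to the paper's own argument, which instead fixes the total weight $p_r+p_s$ on an edge, observes that the objective is convex in $p_r$, and shifts all weight to one endpoint, iterating until the mass sits on an independent set.

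Two further points. First, your assembly of the final bound is incomplete: besides the second-order term, \eqref{eq:thmmain1} must absorb the exploration cost $\sum_t\gamma(t)$ with $\gamma(t)=\beta b/\min_j\sum_{l\in N_j(t)}s_l(t)$, and to bound this proportionally to $\sum_t\alpha(G_t)$ (yielding the stated constant $3$) you need the companion fact (the paper's \lemref{lem:maxratio}) that the maximin exploration distribution satisfies $1/\min_j\sum_{l\in N_j(t)}s_l(t)\le\alpha(G_t)$; this is proved by placing weight $1/\alpha(G_t)$ on each node of a maximum independent set and noting that every node has a neighbor in that set, for otherwise the set could be enlarged. Your sketch uses the choice of $s_i(t)$ only to keep $\tilde{g}$ bounded, which is not enough. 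Second, your claim that undirectedness enters ``when swapping the roles of chosen and estimated actions inside the expectation'' is misplaced: the moment bounds hold verbatim for directed graphs (the paper reuses them in \thmref{thm:maindirected}); symmetry is needed only in the combinatorial lemma, which is precisely where the directed case fails.
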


Comparing \thmref{thm:main} with \thmref{thm:simple}, we note that for any graph $G_t$, its independence number $\alpha(G_t)$ lower bounds its clique-partition number $\bar{\chi}(G_t)$. In fact, the gap between them can be very large (see \secref{sec:examples}). Thus, the attainable regret using the ELP algorithm is better than the one attained by the ExpBan algorithm. Moreover, the ELP algorithm is able to deal with time-changing graphs, unlike the ExpBan algorithm.

If we take worst-case computational efficiency into account, things are slightly more involved. For the ELP algorithm, the optimal value of $\beta$, needed to obtain \eqref{eq:thmmain2}, requires knowledge of $\sum_{t=1}^{T}\alpha(G_t)$, but computing or approximating the $\alpha(G_t)$ is NP-hard in the worst case. However, there is a simple fix: we create $\lceil \log(k) \rceil$ copies of the ELP algorithm, where copy $i$ assumes that $\sum_{t=1}^{T}\alpha(G_t)$ equals $2^{i-1}$. Note that one of these values must be wrong by a factor of at most $2$, so the regret of the algorithm using that value would be larger by a factor of at most $2$. Of course, the problem is that we don't know in advance which of those $\lceil \log(k) \rceil$ copies is the best one. But this can be easily solved by treating each such copy as a ``meta-action'', and running a standard multi-armed bandits algorithm (such as EXP3) over these $\lceil \log(k) \rceil$ actions. Note that the same idea was used in the construction of the ExpBan algorithm. Since there are $\lceil \log(k) \rceil$ meta-actions, the additional regret incurred is $\Ocal(\sqrt{\log^2(k)T})$. So up to logarithmic factors in $k$, we get the same regret as if we could actually compute the optimal value of $\beta$.

\subsection{Directed Graphs}

So far, we assumed that the graphs we are dealing with are all undirected. However, a natural extension of this setting is to assume a directed graph, where choosing an action $i$ may give us information on the reward of action $j$, but not vice-versa. It is readily seen that the ExpBan algorithm would still work in this setting, with the same guarantee. For the ELP algorithm, we can provide the following guarantee:

\begin{theorem}\label{thm:maindirected}
Under the conditions of \thmref{thm:main} (with the relaxation that the graphs $G_t$ may be directed), it holds for any fixed action $j$ that
\begin{equation}\label{eq:thmmaindirected1}
\sum_{t=1}^{T}g_j(t)-\E\left[\sum_{t=1}^{T}g_{i_t}(t)\right]
~\leq~
3\beta b^2\sum_{t=1}^{T}\bar{\chi}(G_t),
+\frac{\log(k)}{\beta}.
\end{equation}
where $\bar{\chi}(G_t)$ is the clique-partition number of $G_t$.
If we choose $\beta= \sqrt{log(k)/3b^2\sum_t\bar{\chi}(G_t)}$, then the bound equals
\begin{equation}\label{eq:thmmaindirected2}
b\sqrt{3\log(k)\sum_{t=1}^{T}\bar{\chi}(G_t)}.
\end{equation}
\end{theorem}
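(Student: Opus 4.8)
The plan is to mirror the proof of \thmref{thm:main} for undirected graphs, and then replace the single graph-theoretic inequality that was specific to the undirected case with a clique-partition argument valid for directed graphs. First I would run the standard EXP3-style potential analysis on the weights $w_j(t)$. Defining $W(t)=\sum_j w_j(t)$ and tracking $\log(W(T+1)/W(1))$ in the usual two ways yields, after using $\beta\tilde{g}_j(t)\le 1$ (which holds by the choice of $\gamma(t)$ guaranteeing $\beta b/\sum_{l\in N_j(t)}p_l(t)\le 1$) and the inequality $e^x\le 1+x+x^2$ for $x\le1$, a bound of the form
\[
\sum_{t=1}^{T}\sum_{j}p_j(t)\tilde{g}_j(t)
~\ge~
\sum_{t=1}^{T}\tilde{g}_j(t)
-\beta\sum_{t=1}^{T}\sum_{j}p_j(t)\tilde{g}_j(t)^2
-\frac{\log(k)}{\beta}
\]
for any fixed comparator action $j$. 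Taking expectations and using unbiasedness, $\E[\tilde{g}_j(t)]=g_j(t)$ (conditioned on the past, $\tilde g_j(t)$ averages correctly since $i_t\in N_j(t)$ with probability $\sum_{l\in N_j(t)}p_l(t)$), reduces the task to controlling the single second-moment term $\E[\sum_j p_j(t)\tilde g_j(t)^2]$.

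The crux is therefore bounding, per round, the quantity $\E\bigl[\sum_{j}p_j(t)\,\tilde{g}_j(t)^2\bigr]$ in terms of $\bar{\chi}(G_t)$ rather than $\alpha(G_t)$. Computing the conditional expectation over the draw of $i_t$, and using $|\hat g_j(t)|\le b$, this term is at most $b^2\sum_{j}\frac{p_j(t)}{\sum_{l\in N_j(t)}p_l(t)}$, since $\tilde g_j(t)$ is nonzero only when $i_t\in N_j(t)$, an event of probability $\sum_{l\in N_j(t)}p_l(t)$. In the undirected case one bounds the sum $\sum_j \frac{p_j(t)}{\sum_{l\in N_j(t)}p_l(t)}$ by $\alpha(G_t)$ via a symmetry argument on the neighborhoods. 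That argument fails for directed graphs, where $j\in N_i$ need not imply $i\in N_j$, so I would instead partition the vertices of $G_t$ into $\bar\chi(G_t)$ cliques $C_1,\dots,C_{\bar\chi(G_t)}$. Within a single clique $C$, every vertex is an in-neighbor of every other, so for each $j\in C$ we have $N_j(t)\supseteq C$ and hence $\sum_{l\in N_j(t)}p_l(t)\ge \sum_{l\in C}p_l(t)$; this gives
\[
\sum_{j\in C}\frac{p_j(t)}{\sum_{l\in N_j(t)}p_l(t)}
~\le~
\frac{\sum_{j\in C}p_j(t)}{\sum_{l\in C}p_l(t)}
~=~1.
\]
Summing over the $\bar\chi(G_t)$ cliques yields $\sum_j \frac{p_j(t)}{\sum_{l\in N_j(t)}p_l(t)}\le \bar\chi(G_t)$, the clique-partition replacement for the undirected $\alpha(G_t)$ bound.

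I expect the main obstacle to be the directed-graph clique argument just sketched: one must be careful that a clique in a directed graph means a set in which $j\in N_i(t)$ for all ordered pairs $i\ne j$ in the set, so that $C\subseteq N_j(t)$ for each $j\in C$ and the denominator $\sum_{l\in N_j(t)}p_l(t)$ genuinely dominates $\sum_{l\in C}p_l(t)$ (the inclusion $j\in C$ itself is fine since $j\in N_j(t)$ always). Assembling the pieces, each round contributes at most $\beta b^2 \bar\chi(G_t)$ from the second-moment term, so summing over $t$ gives the $3\beta b^2\sum_t\bar\chi(G_t)$ term in \eqref{eq:thmmaindirected1}, where the constant $3$ absorbs the $\gamma(t)$-exploration correction exactly as in the undirected proof. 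The final statement \eqref{eq:thmmaindirected2} then follows by substituting the stated optimizing value of $\beta$ and simplifying. Notably, the optimal exploration distribution $\{s_i(t)\}$ and the value of $\gamma(t)$ are defined identically to \thmref{thm:main}, so no change to the algorithm or its parameters is needed; only the analysis of the variance term differs.
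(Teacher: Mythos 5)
Your proposal is correct and is essentially identical to the paper's proof: the paper likewise reuses the potential-function analysis of \thmref{thm:main} up to \eqref{eq:actualbound}, and replaces \lemref{lem:key} with precisely your clique-partition bound, $\sum_{i=1}^{\bar{\chi}(G_t)}\sum_{j\in C_i}\frac{p_j(t)}{\sum_{l\in N_j(t)}p_l(t)} \leq \sum_{i=1}^{\bar{\chi}(G_t)}\sum_{j\in C_i}\frac{p_j(t)}{\sum_{l\in C_i}p_l(t)} = \bar{\chi}(G_t)$, using the same observation that a clique requires bidirectional edges so that $C_i\subseteq N_j(t)$ for every $j\in C_i$. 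The exploration term is handled the same way in both (keeping the parameters $\{s_i(t)\}$, $\gamma(t)$ of \thmref{thm:main} and bounding via \lemref{lem:maxratio} together with $\alpha(G_t)\leq\bar{\chi}(G_t)$), which is exactly what your ``no change to the algorithm or its parameters'' remark amounts to.
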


Note that this bound is weaker than the one of \thmref{thm:main}, since $\alpha(G_t)\leq \bar{\chi}(G_t)$ as discussed earlier. We do not know whether this bound (relying on the clique-partition number) is tight, but we conjecture that the independence number, which appears to be the key quantity in undirected graphs, is not the correct combinatorial measure for the case of directed graphs\footnote{It is possible to construct examples where the analysis of the ELP algorithm necessarily leads to an $\Ocal(\sqrt{k\log(k)T})$ bound, even when the independence number is $1$}. In any case, we note that even with the weaker bound above, the ELP algorithm still seems superior to the ExpBan algorithm, in the sense that it allows us to deal with time-changing graphs, and that an explicit clique decomposition of the graph is not required. Also, we again have the issue of $\beta$ which is determined by a quantity which is NP-hard to compute, i.e. $\bar{\chi}(G_t)$. However, this can be circumvented using the same trick discussed in the context of undirected graphs.

\section{Lower Bound}

The following theorem provides a lower bound on the regret in terms of the independence number $\alpha(G)$, for a constant graph $G_t=G$.

\begin{theorem}\label{th:lowerbound}
Suppose $G_t = G$ for all $t$, and that actions which are not linked in $G$ get no side-observations whatsoever between them. Then there exists a (randomized) adversary strategy, such that for every $T\ge 374 \alpha(G)^3 $ and any learning strategy, the expected regret is at least $0.06 \sqrt{\alpha(G) T}$.
\end{theorem}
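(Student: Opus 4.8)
The plan is to reduce the problem to a standard multi-armed bandit lower bound living on a maximum independent set of $G$, while using the threshold $T \ge 374\,\alpha(G)^3$ to neutralize the extra information that side-observations could supply. Write $\alpha = \alpha(G)$ and fix a maximum independent set $S \subseteq [k]$ with $|S| = \alpha$. The adversary first draws a ``good'' action $i^\star$ uniformly at random from $S$, and then, i.i.d.\ across rounds, sets $g_{i^\star}(t) \sim \mathrm{Bernoulli}(1/2 + \epsilon)$, sets $g_i(t) \sim \mathrm{Bernoulli}(1/2)$ for $i \in S \setminus \{i^\star\}$, and sets $g_i(t) = 0$ for every $i \notin S$, with $\epsilon$ to be tuned to $\Theta(\sqrt{\alpha/T})$. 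Since the target bound $0.06\sqrt{\alpha T}$ does not involve $b$, I would let every side-observation reveal the exact reward, so the lower bound holds even in the most informative regime. Letting $N_i$ be the number of rounds action $i$ is chosen and $N_{\mathrm{out}} = \sum_{i\notin S} N_i$, a direct computation of the expected accumulated reward (lower-bounding the hindsight-optimal action by $i^\star$) gives the decomposition
\begin{equation*}
\E[\text{regret}] ~\ge~ \epsilon\bigl(T - \E[N_{i^\star}]\bigr) + \tfrac12\,\E[N_{\mathrm{out}}].
\end{equation*}
The first term is the usual bandit cost of failing to identify the planted action; the second is the price of playing an action outside $S$ (which yields zero reward) purely in order to harvest side-observations about $S$.

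The core of the argument is information-theoretic. Because $S$ is independent, the only way to gain information about an action $i \in S$ is to play $i$ itself or to play one of its neighbors, all of which lie outside $S$; hence the number of rounds in which $i$'s reward is observed is $O_i = N_i + \sum_{v \in N_i\setminus\{i\}} N_v$, and summing over the independent set, $\sum_{i\in S} O_i \le \sum_{i\in S} N_i + \alpha\,N_{\mathrm{out}} \le T + \alpha\,N_{\mathrm{out}}$, since a single outside pull neighbors at most $\alpha$ members of $S$. Following the standard change-of-measure scheme, I would compare the learner's law $\mathbb{P}_{i^\star}$ against the null law $\mathbb{P}_0$ in which all of $S$ is fair, and use Pinsker together with the chain rule to bound $\E_{i^\star}[N_{i^\star}] - \E_0[N_{i^\star}] \le T\epsilon\sqrt{c\,\E_0[O_{i^\star}]}$ for an absolute constant $c$, the per-observation Kullback--Leibler cost being $\Theta(\epsilon^2)$. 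Averaging over $i^\star \in S$ and applying Cauchy--Schwarz to $\sum_{i^\star}\sqrt{\E_0[O_{i^\star}]}$ converts this into a bound involving $\sqrt{T + \alpha\,\E_0[N_{\mathrm{out}}]}$, which is exactly the quantity coupling the information harvested to the number of outside pulls.

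The last step is to balance the two terms of the decomposition, and this is where the cubic threshold and the main obstacle appear. If the learner makes many outside pulls, the $\tfrac12\E[N_{\mathrm{out}}]$ term alone already forces large regret; if it makes few, then $\sum_{i\in S}\E_0[O_i]\approx T$ and the classical bandit bound on $\E[N_{i^\star}]$ forces the first term to be large. Making this dichotomy quantitative, the dangerous regime is $N_{\mathrm{out}} \gtrsim T/\alpha$, where outside observations could in principle reveal $S$ cheaply; but there $\tfrac12 N_{\mathrm{out}} \gtrsim T/\alpha$, and requiring $T/\alpha \ge \sqrt{\alpha T}$ is precisely $T \ge \alpha^3$ up to constants, which is the origin of the hypothesis $T \ge 374\,\alpha^3$. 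The main technical difficulty I anticipate is exactly this accounting: carrying the $\alpha\,N_{\mathrm{out}}$ inflation of the observation count through the Cauchy--Schwarz and change-of-measure steps while keeping the penalty $\tfrac12\E[N_{\mathrm{out}}]$ large enough to dominate it, and reconciling the null-measure quantity $\E_0[N_{\mathrm{out}}]$ with the planted-measure quantity $\E_{i^\star}[N_{\mathrm{out}}]$ that appears in the regret (a second, lower-order change of measure). Once these cross-terms are controlled, optimizing $\epsilon = \Theta(\sqrt{\alpha/T})$ yields $\E[\text{regret}] \ge 0.06\sqrt{\alpha T}$, with the explicit constants $374$ and $0.06$ merely tracking Pinsker's factor, the per-observation KL constant, and the slack left in the threshold.
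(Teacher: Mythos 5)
Your adversary (a Bernoulli$(1/2+\epsilon)$ arm planted uniformly at random in a maximum independent set $S$, fair arms elsewhere in $S$, deterministic zero reward outside $S$) and your regret decomposition $\E[\mathrm{regret}]\ge\epsilon(T-\E[N_{i^\star}])+\frac12\E[N_{\mathrm{out}}]$ coincide with the paper's construction, but your proof method is genuinely different: the paper never redoes the information theory. It runs a reduction, converting any side-observation algorithm $\mathcal{A}$ into a standard $\alpha(G)$-armed bandit algorithm $\mathcal{A}'$ that, whenever $\mathcal{A}$ pulls a node $j\notin S$, spends $|N_j\cap S|$ extra ``pure exploration'' bandit rounds to physically collect the side observations $\mathcal{A}$ expects; the known bandit lower bound (Theorem 6.11 of Cesa-Bianchi--Lugosi) is then invoked as a black box on $\mathcal{A}'$. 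The point of that trick is that every quantity in the accounting (the number of out-pulls, bounded by $2\sqrt{\alpha T}$ via a WLOG on the regret; the exploration overhead $2\epsilon\alpha\sqrt{\alpha T}$) lives under the planted mixture, while all null-measure bookkeeping is hidden inside the black box, applied to an algorithm whose first $T$ rounds contain exactly $T$ real pulls.

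Your direct route has a genuine gap exactly at the step you flag but dismiss as ``a second, lower-order change of measure'': it is not lower-order, and as planned it fails. The penalty you must keep is $\frac12\E_{i^\star}[N_{\mathrm{out}}]$ (planted measure), while the information inflation $\sum_{i\in S}\E_0[O_i]\le T+\alpha\,\E_0[N_{\mathrm{out}}]$ involves the null measure, and these two out-pull counts can differ enormously: for the paper's own ``super-action'' graph (one node adjacent to all of $S$), the learner that pulls that node until it detects a biased arm and then commits has $\E_0[N_{\mathrm{out}}]=T$ but $\E_{i^\star}[N_{\mathrm{out}}]=O(\epsilon^{-2}\log\alpha)$. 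Bridging the two by Pinsker at the level of expectations costs $T\cdot\mathrm{TV}$, and with $\epsilon=c'\sqrt{\alpha/T}$ and $m:=\E_0[N_{\mathrm{out}}]$ the averaged correction is at least of order $c'\sqrt{\alpha m T}$ once $m\ge T/\alpha$; demanding $\frac{m}{2}\ge c'\sqrt{\alpha mT}$ forces $m\ge 4c'^2\alpha T$, which is impossible since $m\le T$ (unless $c'$ is taken so small, of order $1/\sqrt{\alpha}$, that the main term degrades to $O(\sqrt{T})$ and the $\sqrt{\alpha}$ factor is lost). So in precisely the dangerous regime where the penalty is needed, the measure-change correction swamps it, and your dichotomy does not close. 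The fix cannot be expectation-level Pinsker: one standard repair is a stopping-time truncation (stop the process the first time $N_{\mathrm{out}}$ exceeds roughly $\sqrt{\alpha T}$, bound the KL of the stopped process, and change measure on the \emph{event} that the cap was hit, a probability bounded by $1$ rather than a count bounded by $T$); the other is the paper's reduction, which sidesteps the mismatch altogether. With such a repair your argument goes through and is self-contained, but as written the key balancing step is broken.
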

A proof is provided in the appendix. The intuition of the proof is that if the graph $G$ has $\alpha(G)$ independent vertices, then an adversary can make this problem as hard as a standard multi-armed bandits problem, played on $\alpha(G)$ actions. Using a known lower bound of $\Omega(\sqrt{nT})$ for multi-armed bandits on $n$ actions, our result follows\footnote{We note that if the maximal degree of every node is bounded by $d$, it is possible to get the lower bound for $T\ge \Omega(d^2 \alpha(G))$ (as opposed to $T\ge \Omega(\alpha(G)^3)$); see the proof for details.}.

For constant undirected graphs, this lower bound matches the regret upper bound for the ELP algorithm (\thmref{thm:main}) up to logarithmic factors. For directed graphs, the difference between them boils down to the difference between $\bar{\chi}(G)$ and $\alpha(G)$. For many well-behaved graphs, this gap is rather small. However, for general graphs, the difference can be huge - see the next section for details.

\section{Examples}\label{sec:examples}

Here, we briefly discuss some concrete examples of graphs $G$, and show how the regret performance of our algorithms depend on their structure. An interesting issue to notice is the potential gap between the performance of our algorithms, through the graph's independence number $\alpha(G)$ and clique-partition number $\bar{\chi}(G)$.

First, consider the case where there exists a single action, such that choosing it reveals the rewards of all the other actions. In contrast, choosing the other actions only reveal their own reward. At first blush, it may seem that having such a ``super-action'', which reveals everything that happens in the current round, should help us improve our regret. However, the independence number $\alpha(G)$ of such a graph is easily seen to be $k-1$. Based on our lower bound, we see that this ``super-action'' is actually not helpful at all (up to negligible factors).

Second, consider the case where the actions are endowed with some metric distance function, and edge $(i,j)$ is in $G$ if and only if the distance between $i,j$ is at most some fixed constant $r$. We can think of each action $i$ as being in the center of a sphere of radius $r$, such that the reward of action $i$ is propagated to every other action in that sphere. In this case, $\alpha(G)$ is essentially the number of non-overlapping spheres we can \emph{pack} in $G$. In contrast, $\bar{\chi}(G)$ is essentially the number of spheres we need to \emph{cover} $G$. Both numbers shrink rapidly as $r$ increases, improving the regret of our algorithms. However, the sphere covering size can be much larger than the sphere packing size. For example, if the actions are placed as the elements in $\{0,1/2,1\}^n$, we use the $l_{\infty}$ metric, and $r\in (1/2,1)$, it is easily seen that the sphere packing number is just $1$. In contrast, the sphere covering number is at least $2^n=k^{\log_{3}(2)}\approx k^{0.63}$, since we need a separate sphere to cover every element in $\{0,1\}^n$.

Third, consider the random Erd\"{o}s - R\'{e}nyi graph $G=G(k,p)$, which is formed by linking every action $i$ to every action $j$ with probability $p$ independently. It is well known that when $p$ is a constant, the independence number $\alpha(G)$ of this graph is only $\Ocal(\log(k))$, whereas the clique-partition number $\bar{\chi}(G)$ is at least $\Omega(k/\log(k))$. This translates to a regret bound of $\Ocal(\sqrt{kT})$ for the ExpBan algorithm, and only $\Ocal(\sqrt{\log^2(k)T})$ for the ELP algorithm. Such a gap would also hold for a directed random graph.

\section{Empirical Performance Gap between ExpBan and ELP}\label{sec:experiments}

In this section, we show that the gap between the performance of the ExpBan algorithm and the ELP algorithm can be real, and is not just an artifact of our analysis.

To show this, we performed the following simple experiment: we created a random Erd\"{o}s - R\'{e}nyi  graph over $300$ nodes, where each pair of nodes were linked independently with probability $p$. Choosing any action results in observing the rewards of neighboring actions in the graph. The reward of each action at each round was chosen randomly and independently to be $1$ with probability $1/2$ and $0$ with probability $1/2$, except for a single node, whose reward equals $1$ with a higher probability of $3/4$. We then implemented the ExpBan and ELP algorithms in this setting, for $T=30,000$. For comparison, we also implemented the standard EXP3 multi-armed bandits algorithm \cite{AuerCesFrSc02}, which doesn't use any side-observations. All the parameters were set to their theoretically optimal values. The experiment was repeated for varying $p$ and over $10$ independent runs.

The results are displayed in Figure \ref{fig:res}. The $X$-axis is the iteration number, and the $Y$-axis is the mean payoff obtained so far, averaged over the $10$ runs (the variance in the numbers was minuscule, and therefore we do not report confidence intervals). For $p=0.05$, the graph is rather empty, and the advantage of using side observations is not large. As a result, all 3 algorithms perform roughly the same for this choice of $T$. As $p$ increases, the value of side-obervations increase, and the the performance of our two algorithms, which utilize side-observations, improves over the standard multi-armed bandits algorithm. Moreover, for intermediate values of $p$, there is a noticeable gap between the performance of ExpBan and ELP. This is exactly the regime where the gap between the clique-partition number (governing the regret bound of ExpBan) and the independence number (governing the regret bound for the ELP algorithm) tends to be larger as well\footnote{Intuitively, this can be seen by considering the extreme cases - for a complete graph over $k$ nodes, both numbers equal $1$, and for an empty graph over $k$ nodes, both numbers equal $k$. For constant $p\in (0,1)$, there is a real gap between the two, as discussed in \secref{sec:examples}}. Finally, for large $p$, the graph is almost complete, and the advantage of ELP over ExpBan becomes small again (since most actions give information on most other actions).

\begin{figure}[t]
\begin{center}
\includegraphics[trim = 0.1cm 0.45cm 0cm 1.4cm, clip=true, scale=0.57]{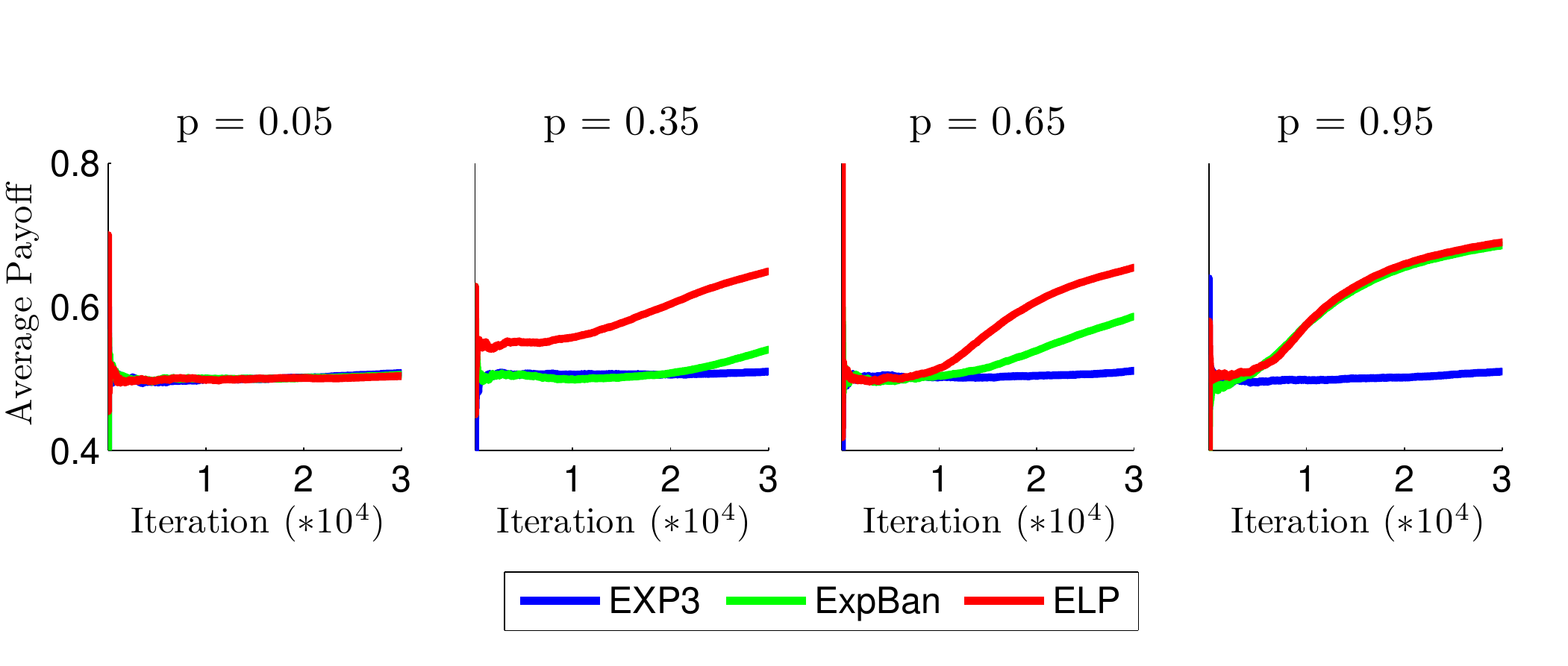}
\end{center}
\vskip -0.3cm
\caption{Experiments on random graphs. \label{fig:res}}
\vskip -0.3cm
\end{figure}

\section{Discussion}\label{sec:discussion}

In this paper, we initiated a study of a large family of online learning problems with side observations. In particular, we studied the broad regime which interpolates between the experts setting and the bandits setting of online learning. We provided algorithms, as well as upper and lower bounds on the attainable regret, with a non-trivial dependence on the information feedback structure.

There are many open questions that warrant further study. First, the upper and lower bounds essentially match only in particular settings (i.e., in undirected graphs, where no side-observations whatsoever, other than those dictated by the graph are allowed). Can this gap be narrowed or closed? Second, our lower bounds depend on a reduction which essentially assumes that the graph is constant over time. We do not have a lower bound for changing graphs. Third, it remains to be seen whether other online learning results can be generalized to our setting, such as learning with respect to policies (as in EXP4 \cite{AuerCesFrSc02}) and obtaining bounds which hold with high probability. Fourth, the model we have studied assumed that the observation structure is known. In many practical cases, the observation structure may be known just partially or approximately. Is it possible to devise algorithms for such cases? %Also, while our results use certain combinatorial graph parameters, there might be other, information-theoretic parameters which more fully capture the essence of the problem in its full generality.

{\bf Acknowledgements.}
This research was supported in part by the Google Inter-university center for Electronic Markets and Auctions.

%\begin{itemize}
%    \item Can the gap between our lower and upper bounds be narrowed or closed? What is the true combinatorial parameter of the graph which determines the regret bound?
%    \item Can the lower bound be improved to time-changing graphs?
 %   \item Is there a real tradeoff between regret performance and computational complexity? Namely, is it provably hard to get the optimal regret?
 %   \item Are there graphs for which our upper bound in \thmref{thm:main} is better (in more than a constant) than the bound which depends on the clique partition number? By how much?
 %   \item Dealing with the case of unknown graph structure?
%\end{itemize}

\newpage

\bibliographystyle{plain}
\bibliography{mybib}

\newpage
\appendix
\section{Proofs}\label{app:proofs}

\subsection{Proof of \thmref{thm:simple}}
Suppose we split the actions into $c$ cliques $C_1,C_2,\ldots,C_c$. First, let us consider the expected regret of the exponentially weighted forecaster ran over any such clique. Denoting the actions of the clique by $1,\ldots,n$, the forecaster works as follows: first, it initializes weights $w_1,\ldots,w_n$ to be $1$. At each round, it picks an action $i$ with probability $w_i/\sum{w_i}$, receives the reward $g_i(t)$, and observes the noisy reward value $\hat{g}_j(t)$ for each of the other actions. It then updates $w_i = w_i \exp(\beta \hat{g}_i(t))$ (for some parameter $\beta\in (0,1/b)$) for all $i=1,\ldots,n$.

The analysis of this algorithm is rather standard, with the main twist being that we only observe unbiased estimates of the rewards, rather than the actual reward. For completeness, we provide this analysis in the following lemma.

\begin{lemma}\label{lem:expp}
The expected regret of the forecaster described above, with respect to the actions in clique $|C_i|$ and under the optimal choice of the parameter $\beta$ is at most $b\sqrt{\log(|C_i|)T}$.
\end{lemma}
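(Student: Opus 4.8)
The plan is to use the standard ``sum of weights'' potential argument for exponential weighting, adapted to handle \emph{estimated} rather than observed rewards. Write $n=|C_i|$ for the number of actions in the clique, let $w_j(t)$ denote the weight of action $j$ at the start of round $t$ (so $w_j(1)=1$), and set $W_t=\sum_{l=1}^n w_l(t)$ and $p_i(t)=w_i(t)/W_t$. The goal is to sandwich $W_{T+1}$ between an upper bound coming from a round-by-round multiplicative estimate and a lower bound coming from tracking the single comparator action $j$.

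For the upper bound, first I would write
\[
\frac{W_{t+1}}{W_t}=\sum_{i=1}^n p_i(t)\exp\bigl(\beta\hat{g}_i(t)\bigr).
\]
Since $\beta\in(0,1/b)$ and $|\hat{g}_i(t)|\le b$, we have $\beta\hat{g}_i(t)\le 1$, so the elementary inequality $e^x\le 1+x+x^2$ (valid for $x\le1$) gives
\[
\frac{W_{t+1}}{W_t}\le 1+\beta\sum_i p_i(t)\hat{g}_i(t)+\beta^2\sum_i p_i(t)\hat{g}_i(t)^2.
\]
Applying $1+x\le e^x$, taking logarithms, and summing over $t=1,\dots,T$ telescopes the left-hand side to $\log(W_{T+1}/W_1)=\log W_{T+1}-\log n$. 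For the lower bound I would use $W_{T+1}\ge w_j(T+1)=\exp\bigl(\beta\sum_{t=1}^T\hat{g}_j(t)\bigr)$. Combining the two and dividing by $\beta$ yields, for every fixed $j$,
\[
\sum_{t=1}^T\hat{g}_j(t)-\sum_{t=1}^T\sum_i p_i(t)\hat{g}_i(t)\le\frac{\log n}{\beta}+\beta\sum_{t=1}^T\sum_i p_i(t)\hat{g}_i(t)^2.
\]

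Now comes the step the authors flag as the ``main twist'': passing from estimated to true rewards by taking expectations. I would condition on the history through the end of round $t-1$, which determines the weights and hence $p_i(t)$; unbiasedness $\E[\hat{g}_j(t)\mid\text{history}]=g_j(t)$ then gives $\E[\hat{g}_j(t)]=g_j(t)$ and $\E[\sum_i p_i(t)\hat{g}_i(t)]=\E[\sum_i p_i(t)g_i(t)]=\E[g_{i_t}(t)]$, the latter because $\sum_i p_i(t)g_i(t)$ is exactly the conditional expected reward of the algorithm's draw $i_t$. The second-order term is controlled crudely by $|\hat{g}_i(t)|\le b$, giving $\sum_i p_i(t)\hat{g}_i(t)^2\le b^2\sum_i p_i(t)=b^2$, so it contributes at most $\beta b^2 T$. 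Taking expectations of the displayed inequality therefore produces
\[
\sum_{t=1}^T g_j(t)-\E\Bigl[\sum_{t=1}^T g_{i_t}(t)\Bigr]\le\frac{\log n}{\beta}+\beta b^2 T,
\]
and optimizing $\beta$ to balance the two terms gives a bound of order $b\sqrt{\log(n)T}$ with $n=|C_i|$, as claimed.

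The argument is essentially routine, so I do not expect a serious obstacle; the one place requiring genuine care is the expectation step, where the measurability of $p_i(t)$ with respect to the past must be invoked to justify pulling the conditional expectation inside and replacing $\hat{g}$ by $g$. A secondary point worth checking is that the leading constant depends on the precise second-order estimate used: the crude $e^x\le 1+x+x^2$ yields a constant somewhat worse than the stated one, and recovering the clean $b\sqrt{\log(n)T}$ may require the sharper Hoeffding-type bound $\log\E[e^{\beta X}]\le\beta\E[X]+\tfrac{1}{2}\beta^2 b^2$ together with the matching choice of $\beta$; in any case the $\sqrt{\log(n)T}$ scaling and the linear dependence on $b$ are immediate.
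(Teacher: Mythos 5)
Your proof follows essentially the same route as the paper's: the same potential-function argument with $e^x \le 1+x+x^2$, the same telescoping and comparator lower bound $W_{T+1}\ge w_j(T+1)$, and the same expectation step using unbiasedness of the estimates and $|\hat{g}_j(t)|\le b$ to bound the second-order term by $\beta b^2 T$. Your closing caveat about the constant is apt but harmless --- the paper's own optimization of $\frac{\log n}{\beta}+\beta b^2 T$ in fact yields $2b\sqrt{\log(n)T}$ rather than the stated $b\sqrt{\log(n)T}$, so this looseness is in the paper itself and does not affect the claimed scaling.
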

\begin{proof}
We define the potential function $W_t = \sum_{j=1}^{n}w_j(t)$, and get that
\[
\frac{W_{t+1}}{W_t} ~\leq~ \sum_{j=1}^{n}\frac{w_j(t)}{\sum_{l=1}^{n}w_l(t)}\exp(\beta \tilde{g}_j(t)).
\]
For notational convenience, let $p_j(t) = \frac{w_j(t)}{\sum_{l=1}^{n}w_l(t)}$. Since $\tilde{g}_j(t)\leq b$, and $\beta\leq 1/b$, we have $\beta\tilde{g}_j(t)\leq 1$. Thus, we can use the inequality $\exp(x)\leq 1+x+x^2$ (which holds for any $x\leq 1$), and get the upper bound
\[
\sum_{j=1}^{n}p_j(t)\left(1+\beta\tilde{g}_j(t)+2\beta^2\tilde{g}_j(t)^2\right)=
1+\beta\sum_{j=1}^{n}\tilde{g}_j(t)+\beta^2\sum_{j=1}^{n}p_j(t)\tilde{g}_j(t)^2.
\]
Taking logarithms and using the fact that $\log(1+x)\leq x$, we get
\[
\log\left(\frac{W_{t+1}}{W_t}\right) ~\leq~ \beta\sum_{j=1}^{n}p_j(t)\tilde{g}_j(t)+\beta^2\sum_{j=1}^{n}p_j(t)\tilde{g}_j(t)^2.
\]
Summing over all $t$, and canceling the resulting telescopic series, we get
\begin{equation}\label{eq:upbound1}
\log\left(\frac{W_{T+1}}{W_1}\right)
~\leq~ \sum_{t=1}^{T}\left(\beta\sum_{j=1}^{n}p_j(t)\tilde{g}_j(t)+\beta^2\sum_{j=1}^{n}p_j(t)\tilde{g}_j(t)^2\right). \end{equation}
Also, for any fixed action $i$, we have
\begin{equation}\label{eq:lowbound1}
\log\left(\frac{W_{T+1}}{W_1}\right) \geq \log\left(\frac{w_i(T+1)}{W_1}\right) = \beta\sum_{t=1}^{T}\tilde{g}_i(t)-\log(n).
\end{equation}
Combining \eqref{eq:upbound1} with \eqref{eq:lowbound1} and rearranging, we get
\[
\sum_{t=1}^{T}\tilde{g}_i(t) - \sum_{t=1}^{T}\sum_{j=1}^{n}p_{j}(t)\tilde{g}_j(t) \leq \frac{\log(n)}{\beta}+\beta\sum_{t=1}^{T}\sum_{j=1}^{n} p_j(t)\tilde{g}_j(t)^2.
\]
Taking expectations on both sides, and using the facts that $\E[\tilde{g}_j(t)] = g_j(t)$ for all $j,t$, and $|\tilde{g}_j(t)|\leq b$ with probability $1$, we get
\[
\sum_{t=1}^{T}g_i(t) - \sum_{t=1}^{T}\sum_{j=1}^{n}p_{j}(t)g_j(t) \leq \frac{\log(n)}{\beta}+\beta b^2 T.
\]
Thus, by picking $\beta=\sqrt{log(n)/b^2 T}$, we get that the expected regret is at most $b\sqrt{\log(n)T}$.
\end{proof}

Now, we define each such forecaster (one per clique $C_i$) as a meta-action, and run the EXP3 algorithm on the $c$ meta-actions. By the standard guarantee for this algorithm (see corollary 3.2 in \cite{AuerCesFrSc02}), the expected regret incurred by that algorithm with respect to any fixed meta-action is at most $3b\sqrt{c\log(c)T}$. Combining this with \lemref{lem:expp}, we get that the total expected regret of the ExpBan algorithm with respect to any single action is at most
\[
\max_i b\sqrt{\log(|C_i|)T}+3b\sqrt{c\log(c)T} \leq b\sqrt{\log(k)T}+3b\sqrt{c\log(k)T},
\]
which is at most $4b\sqrt{\log(k)cT}$ since $c\geq 1$.

\subsection{Proof of \thmref{thm:main}}\label{subsec:proofthmmain}

To prove the theorem, we will need three lemmas. The first one is straightforward and follows from the definition of $\tilde{g}_j(t)$. The second is a key combinatorial inequality. We were unable to find an occurrence of this inequality in any previous literature,  although we are aware of very special cases proven in the context of cyclic sums (see for instance \cite{Baston74}). The third lemma allows us to derive a more explicit bound by examining a particular choice of $\{s_i(t)\}_{i\in[k],t\in[T]}$.

\begin{lemma}\label{lem:momentbounds}
For all fixed $t,j$, we have
\[
\E\left[\tilde{g}_j(t)\right] = g_{j}(t)
\]
as well as
\[
\E\left[\sum_{j=1}^{k}p_j(t)\tilde{g}_j(t)^2\right] \leq b^2\sum_{j=1}^{k}\frac{p_j(t)}{\sum_{l\in N_j(t)}p_l(t)}.
\]

\end{lemma}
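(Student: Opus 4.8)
The plan is to establish the two claims about $\tilde{g}_j(t)$ directly from its definition in Algorithm~\ref{alg:bandits}, namely $\tilde{g}_j(t) = \hat{g}_j(t)/\sum_{l\in N_j(t)} p_l(t)$ when $i_t\in N_j(t)$ and $\tilde{g}_j(t)=0$ otherwise. The only randomness at round $t$ (conditioned on the past, which fixes the weights $w_j(t)$ and hence the probabilities $p_j(t)$) comes from the choice of $i_t$ and from the randomness in forming the unbiased estimate $\hat{g}_j(t)$. So throughout I would work conditionally on the history up to round $t$, treating the $p_l(t)$ as constants.

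For the unbiasedness claim $\E[\tilde{g}_j(t)] = g_j(t)$, the key observation is that $\tilde{g}_j(t)$ is nonzero exactly on the event $\{i_t \in N_j(t)\}$. Since the action $i_t$ is drawn according to $p(t)$, this event has probability $\sum_{l\in N_j(t)} p_l(t)$. On that event, $\hat{g}_j(t)$ is available and satisfies $\E[\hat{g}_j(t)\mid \text{action chosen}] = g_j(t)$ by the estimator assumption in the problem setting. I would therefore write
\[
\E[\tilde{g}_j(t)] = \sum_{i\in N_j(t)} p_i(t)\,\E\!\left[\frac{\hat{g}_j(t)}{\sum_{l\in N_j(t)} p_l(t)}\;\Big|\; i_t=i\right] = \frac{1}{\sum_{l\in N_j(t)} p_l(t)}\sum_{i\in N_j(t)} p_i(t)\, g_j(t) = g_j(t),
\]
where I use that the denominator is a constant and that conditioning on any $i\in N_j(t)$ yields expectation $g_j(t)$ for the estimate. (This uses the convention $j\in N_j(t)$ so the neighborhood is nonempty and the denominator is positive.)

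For the second-moment bound, I would again exploit that $\tilde{g}_j(t)^2$ vanishes unless $i_t\in N_j(t)$, and use the almost-sure bound $|\hat{g}_j(t)|\le b$ from the estimator assumption. Conditioning on $i_t$ and using $\hat{g}_j(t)^2 \le b^2$ gives
\[
\E[\tilde{g}_j(t)^2] = \sum_{i\in N_j(t)} p_i(t)\,\E\!\left[\frac{\hat{g}_j(t)^2}{\big(\sum_{l\in N_j(t)} p_l(t)\big)^2}\;\Big|\; i_t=i\right] \le \frac{b^2 \sum_{i\in N_j(t)} p_i(t)}{\big(\sum_{l\in N_j(t)} p_l(t)\big)^2} = \frac{b^2}{\sum_{l\in N_j(t)} p_l(t)}.
\]
Multiplying by $p_j(t)$ and summing over $j$ then yields $\E[\sum_j p_j(t)\tilde{g}_j(t)^2] \le b^2\sum_j p_j(t)/\sum_{l\in N_j(t)} p_l(t)$, which is exactly the claimed inequality. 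I do not anticipate a genuine obstacle here: the lemma is labelled ``straightforward'' in the text, and the whole content is careful conditional bookkeeping. The one point requiring mild care is keeping the conditioning explicit so that the $p_l(t)$ may legitimately be pulled out of the expectation as constants, and ensuring the estimator properties $\E[\hat{g}_j(t)\mid\cdot]=g_j(t)$ and $|\hat{g}_j(t)|\le b$ are invoked on the correct conditioning event $\{i_t\in N_j(t)\}$.
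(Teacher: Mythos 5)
Your proposal is correct and follows essentially the same argument as the paper's proof: condition on the chosen action $i_t$, observe that $\tilde{g}_j(t)$ is nonzero only on the event $i_t\in N_j(t)$, and invoke the estimator assumptions $\E[\hat{g}_j(t)\mid \text{action chosen}]=g_j(t)$ and $|\hat{g}_j(t)|\le b$ to obtain both the unbiasedness identity and the second-moment bound. The only cosmetic difference is that you compute $\E[\tilde{g}_j(t)^2]$ for each $j$ separately before weighting by $p_j(t)$ and summing, whereas the paper writes the weighted double sum over $i,j$ directly; the calculations are identical.
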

\begin{proof}
It holds that
\[
\E\left[\tilde{g}_j^i(t)\right] = \sum_{i=1}^{k}p_{i}(t)\E[\tilde{g}_j(t)~|~\text{action i was picked}] = \sum_{i\in N_j(t)}p_i(t)  \frac{g_{j}(t)}{\sum_{l\in N_j(t)} p_l(t)} = g_{j}(t).
\]
As to the second part, we have
\begin{align*}
&\E\left[\sum_{j=1}^{k}p_j(t)\tilde{g}_j(t)^2\right]
~=~ \sum_{i,j=1}^{k}p_j(t)p_i(t)\E\left[\tilde{g}_j(t)^2~|~ \text{action i was picked} \right]\\
& \leq \sum_{j=1}^{k}\sum_{i\in N_j(t)}p_j(t)p_i(t)\frac{b^2}{\left(\sum_{l\in N_j(t)}p_l(t)\right)^2}
~=~ b^2\sum_{j=1}^{k}\frac{p_j(t)}{\sum_{l\in N_j(t)}p_l(t)}.
\end{align*}
\end{proof}

\begin{lemma}\label{lem:key}
Let $G$ be a graph over $k$ nodes, and let $\alpha(G)$ denote the independence number of $G$ (i.e., the size of its largest independent set). For any $j\in[k]$, define $N_j$ to be the nodes adjacent to node $j$ (including node $j$). Let $p_1,\ldots,p_k$ be arbitrary positive weights assigned to the node. Then it holds that
\[
\sum_{i=1}^{k}\frac{p_i}{\sum_{l\in N_i}p_l} ~\leq~ \alpha(G).
\]
\end{lemma}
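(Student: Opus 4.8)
The plan is to prove the bound by induction on the number of nodes $k$, peeling off one closed neighborhood at a time. First note that since $i\in N_i$, every summand satisfies $p_i/\sum_{l\in N_i}p_l\le 1$, so the base case $k=1$ gives exactly $1=\alpha(G)$. For the inductive step I would single out the vertex whose closed neighborhood has the smallest total weight: let $v$ be a node minimizing $\sum_{l\in N_j}p_l$ over $j\in[k]$, and write $m=\sum_{l\in N_v}p_l$ and $S=N_v$.

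The first key step is to show that the vertices of $S$ contribute at most $1$ to the sum. This is where the minimizing choice of $v$ pays off: for every $i\in S=N_v$ we have $\sum_{l\in N_i}p_l\ge m$ by minimality, hence
\[
\sum_{i\in S}\frac{p_i}{\sum_{l\in N_i}p_l}\ \le\ \frac{1}{m}\sum_{i\in N_v}p_i\ =\ \frac{m}{m}\ =\ 1,
\]
using $\sum_{i\in N_v}p_i=m$ by definition.

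The second key step handles the remaining nodes $V\setminus S$ via the inductive hypothesis applied to the induced subgraph $G'=G[V\setminus S]$. Two facts are needed. (i) Removing a closed neighborhood drops the independence number: $\alpha(G')\le\alpha(G)-1$, because any independent set $I'$ of $G'$ lies entirely outside $N_v$, so $I'\cup\{v\}$ is independent in $G$ (this is exactly the place where undirectedness is used — it guarantees that no node outside $N_v$ is adjacent to $v$). (ii) Restricting to $G'$ only shrinks the denominators, since $N_i\cap(V\setminus S)\subseteq N_i$; hence for $i\in V\setminus S$ one has $p_i/\sum_{l\in N_i}p_l\le p_i/\sum_{l\in N_i\cap(V\setminus S)}p_l$, and the induction bounds the right-hand sum by $\alpha(G')\le\alpha(G)-1$. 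Adding the two contributions gives $1+(\alpha(G)-1)=\alpha(G)$, closing the induction.

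The delicate point — and the one I would flag as the crux — is getting the two inequality directions to cooperate: peeling off the minimum-weight neighborhood is what caps its contribution at $1$, while the fact that passing to $G'$ enlarges each remaining summand (smaller denominator) is harmless precisely because it only loosens the quantity the inductive hypothesis controls. As an alternative I would note the slicker route recognizing this as a weighted Caro--Wei inequality: draw independent $T_l\sim\mathrm{Exp}(p_l)$, let $X$ be the set of nodes that are the $T$-minimum within their own closed neighborhood; then $X$ is independent (here symmetry of adjacency is again essential), $\Pr[i\in X]=p_i/\sum_{l\in N_i}p_l$ by the min-of-exponentials property, and $\E|X|=\sum_i p_i/\sum_{l\in N_i}p_l\le\alpha(G)$ since $|X|\le\alpha(G)$ surely.
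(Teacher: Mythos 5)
Your proof is correct, but it takes a genuinely different route from the paper's. The paper argues by a weight-shifting (smoothing) argument: if two adjacent nodes $r,s$ both carry positive weight, then, holding $p_r+p_s=c$ fixed, the quantity $\sum_{i}p_i/\sum_{l\in N_i}p_l$ decomposes into six pieces, each convex in $p_r$, so its maximum over $p_r\in[0,c]$ is attained at an endpoint; shifting the entire weight $c$ onto one of the two nodes therefore never decreases the sum, and iterating this (each step zeroes out one more node) eventually leaves the weights supported on an independent set, where the sum equals the support size and is hence at most $\alpha(G)$. Your induction instead peels off the closed neighborhood $N_v$ of the vertex $v$ minimizing $\sum_{l\in N_j}p_l$: minimality caps that neighborhood's contribution at $1$, passing to $G[V\setminus N_v]$ only shrinks denominators (which is harmless since the inductive bound controls the larger quantity), and $\alpha(G[V\setminus N_v])\le\alpha(G)-1$ closes the recursion. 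This avoids both the convexity computation and the termination argument for the iterative reweighting, and it isolates exactly where undirectedness enters---vertices outside $N_v$ are non-adjacent to $v$, so $v$ extends any independent set of the subgraph---which is precisely the step that breaks for directed graphs, consistent with the paper's remark (in the proof of Theorem 3) that the lemma admits directed counterexamples. Your probabilistic alternative (a weighted Caro--Wei bound via exponential clocks, with $\Pr[i\in X]=p_i/\sum_{l\in N_i}p_l$ and $X$ independent almost surely) is also correct and is the shortest of the three arguments; it additionally exhibits the sum as the expected size of an explicit random independent set, a structural interpretation the other two proofs lack. The only detail left implicit in your write-up is the degenerate case $N_v=V$ (the subgraph is empty), which your induction handles trivially, so there is no gap.
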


\begin{proof}
We will actually prove the claim for any nonnegative weights $p_1,\ldots,p_k$ (i.e., they are allowed to take $0$ values), under the convention that if $p_j=0$ and $\sum_{l\in N_j}p_i=0$ as well, then $\sum_{i=1}^{k}p_i/\sum_{l\in N_i}p_i=1$.

Suppose on the contrary that there exist some values for $p_1,\ldots,p_k$ such that $\sum_{i=1}^{k}p_i/\sum_{l\in N_i}p_i > \alpha(G)$. Now, if $p_1,\ldots,p_k$ are non-zero only on an independent set $S$, then
\[
\sum_{i=1}^{k}\frac{p_i}{\sum_{l\in N_i}p_i} = \sum_{i\in S}\frac{p_i}{p_i} = |S|.
\]
Since $|S|\leq \alpha(G)$, it follows that there exist some adjacent nodes $r,s$ such that $p_r,p_s>0$. However, we will show that in that case, we can only increase the value of $\sum_{i=1}^{k}p_i/\sum_{l\in N_i}p_i$ by shifting the entire weight $p_r+p_s$ to either node $r$ or node $s$, and putting weight $0$ at the other node. By repeating this process, we are guaranteed to eventually arrive at a configuration where the weights are non-zero on an independent set. But we've shown above that in that case, $\sum_{i=1}^{k}p_i/\sum_{l\in N_i}p_i \leq \alpha(G)$, so this means the value of this expression with respect to the original configuration was at most $\alpha(G)$ as well.

To show this, let us fix $p_r+p_s=c$ (so that $p_s=c-p_r$) and consider how the value of the expression changes as we vary $p_r$. The sum in the expression $\sum_{i=1}^{k}p_i/\sum_{l\in N_i}p_i$ can be split to 6 parts: when $i=r$, when $i=s$, when $i$ is a node adjacent to $s$ but not to $r$, when $i$ is  adjacent to $r$ but not to $s$, when $i$ is adjacent to both, and when $i$ is adjacent to neither of them. Decomposing the sum in this way, so that $p_r$ appears everywhere explicitly, we get
\begin{align*}
&\frac{p_r}{c+\sum_{l\in N_r\setminus {r,s}}p_l}
+\frac{c-p_r}{c+\sum_{l\in N_j\setminus {r,s}}p_l}
+\sum_{i:\{r,s\}\cap N_i = s}\frac{p_i}{c-p_r+\sum_{l\in N_i\setminus{s}}p_l}\\
&+\sum_{i:\{r,s\}\cap N_i = r}\frac{p_i}{p_r+\sum_{l\in N_i\setminus{r}}p_l}
+\sum_{i:i\notin\{r,s\},{r,s}\subseteq N_i}\frac{p_i}{c+\sum_{l\in N_i\setminus\{r,s\}}p_l}
+\sum_{i:\{r,s\}\cap N_i=\emptyset}\frac{p_i}{\sum_{l\in N_i}p_l}.
\end{align*}
It is readily seen that each of the $6$ elements in the sum above is convex in $p_r$. This implies that the maximum of this expression is attained at the extremes, namely either $p_r=0$ (hence $p_s=c$) or $p_r=c$ (hence $p_s=0$). This proves that indeed shifting weights between adjacent nodes can only increase the value of $\sum_{i=1}^{k}p_i/\sum_{l\in N_i}p_i$, and as discussed earlier, implies the result stated in the lemma.
\end{proof}

\begin{lemma}\label{lem:maxratio}
Consider a graph $G$ over nodes $1,\ldots,k$, and let $\alpha(G)$ be its independence number. For any $j\in[k]$, define $N_j$ to be the nodes adjacent to node $j$ (including node $j$). Then there exist values of $s_1,\ldots,s_k$ on the $k$-simplex, such that
\begin{equation}
\frac{1}{\min_{j\in[k]}\sum_{l\in N_j}s_l}\leq \alpha(G).\label{eq:maxratio}
\end{equation}
\end{lemma}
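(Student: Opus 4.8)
The plan is to restate the claim in a more convenient form and then exhibit an explicit maximizer. Dividing through, the inequality \eqref{eq:maxratio} is equivalent to the existence of a point $s=(s_1,\ldots,s_k)$ on the $k$-simplex for which $\min_{j\in[k]}\sum_{l\in N_j}s_l \geq 1/\alpha(G)$. In other words, I want a probability distribution over the nodes that places total weight at least $1/\alpha(G)$ on the closed neighborhood $N_j$ of \emph{every} node $j$ simultaneously. This is exactly the quantity that the linear program defining $\{s_i(t)\}$ in \thmref{thm:main} maximizes, so it suffices to produce one feasible $s$ that attains the target value.

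To build such an $s$, I would start from a maximal independent set $S\subseteq[k]$ (maximal with respect to inclusion) and set $s_l=1/|S|$ for $l\in S$ and $s_l=0$ otherwise; this clearly lies on the simplex. Since $S$ is an independent set, $|S|\leq\alpha(G)$. The key observation is that a maximal independent set is automatically a \emph{dominating} set: for any node $j$, either $j\in S$, in which case $j\in N_j\cap S$, or else maximality forces $S\cup\{j\}$ to contain an edge, so $j$ is adjacent to some $l\in S$ and hence (using that $G$ is undirected) $l\in N_j\cap S$. Either way $N_j\cap S\neq\emptyset$.

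With this in hand the bound is immediate: for every $j$ we have $\sum_{l\in N_j}s_l \geq |N_j\cap S|/|S| \geq 1/|S| \geq 1/\alpha(G)$, where the last step uses $|S|\leq\alpha(G)$. Taking the minimum over $j$ gives $\min_{j}\sum_{l\in N_j}s_l\geq 1/\alpha(G)$, which rearranges to \eqref{eq:maxratio}.

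The only real content is the combinatorial fact that every inclusion-maximal independent set dominates the graph while having size at most $\alpha(G)$; once this is recognized, the rest is bookkeeping. I expect the main subtlety to be the reliance on undirectedness in the domination step: if $j$ is adjacent to $l\in S$ I need $l\in N_j$, which is guaranteed only when the edge relation is symmetric (consistent with the fact that \lemref{lem:maxratio} is invoked in the undirected analysis of \thmref{thm:main}). An alternative route would be to interpret $1/\min_{j}\sum_{l\in N_j}s_l$ as a fractional domination number and bound it by the integral domination number and then by $\alpha(G)$, or to derive the statement from \lemref{lem:key} via linear programming duality; but the explicit uniform-weight construction above seems the most transparent, and matches the paper's stated aim of examining a particular choice of $s$.
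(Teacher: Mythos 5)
Your proof is correct and takes essentially the same approach as the paper: the paper likewise puts uniform weight on an independent set (a maximum one, of size exactly $\alpha(G)$) and establishes the same domination property — any node $j$ with $\sum_{l\in N_j}s_l=0$ could be added to the set, contradicting maximality — to conclude $\sum_{l\in N_j}s_l\geq 1/\alpha(G)$ for all $j$. Your only variation, using an inclusion-maximal independent set of size $|S|\leq\alpha(G)$ instead of a maximum one, is harmless (and has the mild bonus of being constructible greedily), but the argument is the same.
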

\begin{proof}
Let $S$ be a largest independent set of $G$, so that $|S|=\alpha(G)$.
Consider the following specific choice for the values of $s_1,\ldots,s_k$: For any $j$ such that $j\in S$, let $s_j=1/\alpha(G)$, and $s_j=0$ otherwise.
Suppose there was some node $j$ such that $\sum_{l\in N_j} s_l=0$. By the way we chose values for $s_1,\ldots,s_k$, this implies that node $j$ is not adjacent to any node in $S$, so $S\cup \{j\}$ would also be an independent set, contradicting the assumption that $S$ is a largest independent set. But since each value of $s_l$ is either $0$ or $1/\alpha(G)$, it follows that $\sum_{l\in N_j} s_l>1/\alpha(G)$. This is true for any node $j$, from which \eqref{eq:maxratio} follows.
\end{proof}

We now turn to the proof of the theorem itself.

\begin{proof}[Proof of \thmref{thm:main}]
With the key lemmas at hand, most of the remaining proof is rather similar to the standard analysis for multi-armed bandits (e.g., \cite{AuerCesFrSc02}). We define the potential function $W_t = \sum_{j=1}^{k}w_j(t)$, and get that
\begin{equation}\label{eq:begin}
\frac{W_{t+1}}{W_t} ~\leq~ \sum_{j=1}^{k}\frac{w_j(t)}{\sum_{l=1}^{k}w_l(t)}\exp(\beta \tilde{g}_j(t)).
\end{equation}
We have that $\beta\tilde{g}_j(t)\leq 1$, since by definition of $\beta$ and $\tilde{g}_j(t)$,
\[
\beta\tilde{g}_j(t) \leq \frac{\beta b}{\sum_{l\in N_j(t)}p_l(t)}
\leq \frac{\beta b}{\sum_{l\in N_j(t)}\gamma(t)s_l(t)}
= \frac{\beta b}{\sum_{l\in N_j(t)}s_l(t)}\frac{\min_{j\in[k]} \sum_{l\in N_j(t)}s_l(t)}{\beta b} \leq 1.
\]
 Using the definition of $p_j(t)$ and the inequality $\exp(x)\leq 1+x+x^2$ for any $x\leq 1$, we can upper bound \eqref{eq:begin} by
\begin{align*}
&\sum_{j=1}^{k}\frac{p_j(t)-\gamma(t) s_j(t)}{1-\gamma(t)}\left(1+\beta\tilde{g}_j(t)+\beta^2\tilde{g}_j(t)^2\right)\\
&\leq~ 1+\frac{\beta}{1-\gamma(t)}\sum_{j=1}^{k}p_j(t)\tilde{g}_j(t)+\frac{2\beta^2}{1-\gamma(t)}\sum_{j=1}^{k}p_j(t)\tilde{g}_j(t)^2.
\end{align*}
Taking logarithms and using the fact that $\log(1+x)\leq x$, we get
\[
\log\left(\frac{W_{t+1}}{W_t}\right) ~\leq~ \frac{\beta}{1-\gamma(t)}\sum_{j=1}^{k}p_j(t)\tilde{g}_j(t)+\frac{\beta^2}{1-\gamma(t)}\sum_{j=1}^{k}p_j(t)\tilde{g}_j(t)^2.
\]
Summing over all $t$, and canceling the resulting telescopic series, we get
\begin{equation}\label{eq:upbound}
\log\left(\frac{W_{T+1}}{W_1}\right)
~\leq~ \sum_{t=1}^{T}\sum_{j=1}^{k}\frac{\beta}{1-\gamma(t)}p_{j}(t)\tilde{g}_j(t)
+\sum_{t=1}^{T}\sum_{j=1}^{k}\frac{\beta^2}{1-\gamma(t)}p_j(t)\tilde{g}_j(t)^2.
\end{equation}
Also, for any fixed action $i$, we have
\begin{equation}\label{eq:lowbound}
\log\left(\frac{W_{T+1}}{W_1}\right) \geq \log\left(\frac{w_i(T+1)}{W_1}\right) = \beta\sum_{t=1}^{T}\tilde{g}_i(t)-\log(k).
\end{equation}
Combining \eqref{eq:upbound} with \eqref{eq:lowbound} and rearranging, we get
\[
\beta\sum_{t=1}^{T}\tilde{g}_i(t) - \sum_{t=1}^{T}\sum_{j=1}^{k}\frac{\beta}{1-\gamma(t)}p_{j}(t)\tilde{g}_j(t) \leq \log(k)+\sum_{t=1}^{T}\sum_{j=1}^{k}\frac{\beta^2}{1-\gamma(t)}p_j(t)\tilde{g}_j(t)^2.
\]
Taking expectations on both sides, and using \lemref{lem:momentbounds}, we get
\[
\beta\sum_{t=1}^{T}g_i(t)-\sum_{t=1}^{T}\sum_{j=1}^{k}\frac{\beta}{1-\gamma(t)}p_j(t)g_j(t)
~\leq~
\log(k)+\sum_{t=1}^{T}\sum_{j=1}^{k}\frac{b^2\beta^2}{1-\gamma(t)}\frac{p_j(t)}{\sum_{l\in N_j(t)}p_l(t)}.
\]
After some slight manipulations, and using the fact that $g_j(t)\in [0,1]$ for all $j,t$, we get
\[
\sum_{t=1}^{T}g_i(t)-\sum_{t=1}^{T}\sum_{j=1}^{k}p_j(t)g_j(t)
~\leq~
\sum_{t=1}^{T}\gamma(t) + \frac{\log(k)}{\beta}+\sum_{t=1}^{T}\frac{b^2\beta}{1-\gamma(t)}\sum_{j=1}^{k}\frac{p_j(t)}{\sum_{l\in N_j(t)}p_l(t)}.
\]
We note that $1/(1-\gamma(t))$ can be upper bounded by $2$, since by definition of $s_i(t)$,
\[
\gamma(t) = \frac{\beta b}{\max_{a_1,\ldots,a_k}\min_{j\in[k]}\sum_{l\in N_j(t)}a_l(t)}
\leq \frac{\beta b}{\min_{j\in[k]}\sum_{l\in N_j(t)}(1/k)}\leq \beta bk\leq 1/2.
\]
Plugging this in as well as our choice of $\gamma(t)$ in the $\sum_{t}\gamma(t)$ term, and slightly simplifying, we get the upper bound
\begin{equation}\label{eq:actualbound}
\sum_{t=1}^{T}g_i(t)-\sum_{t=1}^{T}\E[g_{i_t}(t)]
~\leq~
\beta b^2 \left( \sum_{t=1}^{T}\frac{1}{\min_{j\in[k]}\sum_{l\in N_j(t)}s_l(t)} +2\sum_{j=1}^{k}\frac{p_j(t)}{\sum_{l\in N_j(t)}p_l(t)}
\right)+\frac{\log(k)}{\beta}.
\end{equation}
Now, we recall that the $\{s_i(t)\}$ terms were chosen so as to minimize the bound above. Thus, we can upper bound it by any fixed choice of $\{s_i(t)\}$. Invoking \lemref{lem:maxratio}, as well as \lemref{lem:key}, the theorem follows.
\end{proof}

\subsection{Proof of \thmref{thm:maindirected}}

The proof is very similar to the one of \thmref{thm:main}, so we'll only point out the differences.

Referring to the proof of \thmref{thm:main} in \subsecref{subsec:proofthmmain}, The analysis is identical up to \eqref{eq:actualbound}. To upper bound the terms there, we can still invoke \lemref{lem:maxratio}. However, \lemref{lem:key}, which was used to upper bound $\sum_{j=1}^{k}p_j(t)/\sum_{l\in N_j(t)}p_l(t)$, not longer applies (in fact, one can show specific counter-examples). Thus, in lieu of \lemref{lem:key}, we will opt for the following weaker bound: Let $C_1,\ldots, C_{\bar{\chi}(G_t)}$ be a smallest possible clique partition of $G_t$. Then we have
\[
\sum_{i=1}^{\bar{\chi}(G_t)}\sum_{j\in C_{i}}\frac{p_j(t)}{\sum_{l\in N_j(t)}p_l(t)} \leq \sum_{i=1}^{\bar{\chi}(G_t)}\sum_{j\in C_{i}}\frac{p_j(t)}{\sum_{l\in C_{i}}p_l(t)} = \bar{\chi}(G_t).
\]
Plugging this upper bound as well as \lemref{lem:maxratio} into \eqref{eq:actualbound}, and using the fact that $\alpha(G_t)\leq \bar{\chi}(G_t)$ for any graph $G_t$, the result follows.

\subsection{Proof of Theorem \ref{th:lowerbound}}

Suppose that we are given a graph $G$ with an independence number $\alpha(G)$. Let $\mathcal{N}$ denote an independent set of $\alpha(G)$ nodes (i.e., no two nodes are connected). Suppose we have an algorithm $\mathcal{A}$ with a low expected regret for every sequence of rewards. We will use this algorithm to form an algorithm for the standard multi-armed bandits problem (with no-side observations). We will then resort to the known lower bound for this problem, to get a lower bound for our setting as well.

Consider first a standard multi-armed bandits game on $\alpha(G)$ actions (with no side-observations), with the following randomized strategy for the adversary: the adversary picks one of the $\alpha(G)$ actions uniformly at random, and at each round, assigns it a random Bernoulli reward with parameter $1/2+\epsilon$ (where $\epsilon$ will be specified later). The other actions are assigned a random Bernoulli reward with parameter $1/2$. Roughly speaking, Theorem 6.11 of \cite{CesaBianchiLu06} shows that with this strategy and for $\epsilon=\Theta(\sqrt{\alpha(G)/T})$, the expected regret of any learning algorithm is at least $\Omega(\sqrt{\alpha(G)T})$.

Now, suppose that for the setting with side-observations, played over the graph $G$, there exists a learning strategy $\mathcal{A}$ that achieves expected cumulative regret of at most $R_{\mathcal{A}}(T)$, for the graph $G$ over $T$ rounds, with respect to any adversary strategy. We will now show how to use $\mathcal{A}$ for the standard multi-armed bandits game described above. To that end, arbitrarily assign the $\alpha(G)$ actions to the $\alpha(G)$ independent nodes in $\Ncal$. We will then implement the following strategy $\mathcal{A}'$:  whenever $\mathcal{A}$ chooses one of the actions in $\mathcal{N}$, we choose the corresponding action in the multi-armed bandits problem and feed the reward back to $\mathcal{A}$ (the reward of all neighboring nodes is 0, which we feed back to $\mathcal{A}$ as well). Whenever $\mathcal{A}$ chooses a node $j$ not in $\mathcal{N}$, we use the next $|N_j\cap\mathcal{N}|$ rounds (where $N_j$ is the neighborhood set of $j$) to do ``pure exploration:'' we go over all the neighbors of node $j$ that belong to $\mathcal{N}$ in some fixed order, and choose each of them once (since rewards are assumed stochastic the order does not matter). Nodes in $N_j\setminus\mathcal{N}$ are known to yield a reward of $0$. The rewards of node $j$ and all its neighbors are then fed to $\mathcal{A}$, as if they were side observations obtained in a single round by choosing a node not in $\mathcal{N}$. Since the rewards are chosen i.i.d., the distribution of these rewards is identical to the case where $\mathcal{A}$ was really implemented with side-observations. We denote $R_{\mathcal{A}'}(T)$ as the expected regret of this strategy $\mathcal{A}'$, after $T$ rounds.

We make the following observation: suppose $\mathcal{A}$ achieves an expected regret satisfying
$$
R_\mathcal{A}(T) \le \sqrt{\alpha(G)  T}
$$
(we can assume this since our goal is to provide a lower bound which will only be smaller). Then the number of times  $\mathcal{A}$ chose actions outside $\mathcal{N}$ must be smaller than  $2\sqrt{\alpha(G)T}$. This is because whenever $\mathcal{A}$ chooses an action not in $\mathcal{N}$ it receives a reward of 0 while the highest expected reward is bigger than $1/2$, so the expected per-round regret would increase by at least $1/2$.

We apply $\mathcal{A}'$ at each round, till $\mathcal{A}$ is called $T$ times. Let $T'$ be the (possibly random) number of rounds which elapsed. It holds that $T'\ge T$, since we have the $T'-T$ pure exploration rounds where $\mathcal{A}$ is not called. In these exploration rounds, we pull arms in $\mathcal{N}$, so our expected regret in those rounds is at most $\epsilon$. Moreover, by the observation above, the number of such rounds is at most $2\alpha(G)\sqrt{\alpha(G)T}$, since $\mathcal{A}$ may choose an action outside $\mathcal{N}$ at most $2\sqrt{\alpha(G)T}$ times, and this follows by at most $|\mathcal{N}|=\alpha(G)$ pure exploration steps. In rounds where we do not do exploration steps, the expected per-round regret of $\mathcal{A'}$ is the same as the expected per-round regret of $\mathcal{A}$. Overall, this implies that
\begin{equation}\label{eq:regcomp}
R_{\mathcal{A}'}(T') \leq R_\mathcal{A}(T) + 2\epsilon \alpha(G) \sqrt{\alpha(G)T}
\end{equation}
Since the expected regret is monotone in the number of rounds, we can lower bound $R_{\mathcal{A}'}(T')$ by $R_{\mathcal{A}'}(T)$. Rearranging, we get
$$
R_\mathcal{A}(T) \ge R_{\mathcal{A}'}(T) - 2\epsilon \alpha(G) \sqrt{\alpha(G)  T}.
$$
Now, $\mathcal{A}'$ is a strategy for the standard multi-armed bandits setting, with a randomized adversary strategy which is identical to the one used to establish the lower bound of \cite[Theorem 6.11]{CesaBianchiLu06}. Using this lower bound, by selecting $\epsilon = \sqrt{c_1 \alpha(G) /T}$ with $c_1=1/(8 \ln(4/3))$, we obtain
\begin{equation}\label{eq:regcomp2}
R_\mathcal{A}(T) \ge \sqrt{T\alpha(G)} c_2 - 2\sqrt{c_1} \alpha(G) ^2,
\end{equation}
where the first term of the right hand side comes from Page 168 in  \cite{CesaBianchiLu06} and
$$ c_2 = \frac{\sqrt{2}-1}{\sqrt{32 \ln(4/3)}}.$$
Since $T\ge 16 \alpha(G)^3 c_1/c_2^2 $, we have that $R_\mathcal{A}(T) \ge  \sqrt{T\alpha(G)} c_2/2$. Plugging in the values of $c_1,c_2$ above, the result follows.

Finally, we note that if the maximal degree of any node in $G$ is bounded by $d$, then \eqref{eq:regcomp} can be improved to
\[
R_{\mathcal{A}'}(T') \leq R_\mathcal{A}(T) + 2\epsilon d \sqrt{\alpha(G)T},
\]
since the number of pure-exploration steps following a call to $\mathcal{A}$ is at most $d$ rather than $\alpha(G)$. Repeating the analysis above, we get that \eqref{eq:regcomp2} is replaced by
\[
R_\mathcal{A}(T) \ge \sqrt{T\alpha(G)} c_2 - 2\sqrt{c_1} d \alpha(G).
\]
This allows us to give the same lower bound, for any $T\ge 16 \alpha(G)d^2 c_1/c_2^2 $, as opposed to $T\ge 16 \alpha(G)^3 c_1/c_2^2 $ as before.

\end{document}